\title{A Block Coordinate Ascent Algorithm \\
for Mean-Variance Optimization}
\author{
  Bo Liu\thanks{Equal contribution. Corresponding to: \texttt{boliu@auburn.edu}. \newline \newline
32nd Conference on Neural Information Processing Systems (NIPS 2018), Montreal, Canada. Copyright 2018 by the authors.} \\
  Auburn University\\
  \texttt{boliu@auburn.edu} \\
  \and
  Tengyang Xie\footnotemark[1]\\
  UMass Amherst\\
  \texttt{txie@cs.umass.edu} \\
  \and
  Yangyang Xu\\
  Rensselaer Polytechnic Institute\\
  \texttt{xuy21@rpi.edu} \\
  \and
  Mohammad Ghavamzadeh\\
  Facebook AI Research\\
  \texttt{mgh@fb.com} \\
  \and
  Yinlam Chow\\
  Google DeepMind\\
  \texttt{yinlamchow@google.com} \\
  \and
  Daoming Lyu\\
  Auburn University\\
  \texttt{daoming.lyu@auburn.edu} \\
  \and
  Daesub Yoon\\
  ETRI\\
  \texttt{eyetracker@etri.re.kr} \\}
\date{}
\newcounter{RomanNumber}
\newcommand{\RomanNum}[1]{\setcounter{RomanNumber}{#1}\Roman{RomanNumber}}
\definecolor{darkblue}{HTML}{000080}
\newcounter{thm_counter}
\newcounter{lem_counter}
\newcounter{cor_counter}
\newcounter{ass_counter}
\newcounter{rmk_counter}
\newtheorem{theorem}[thm_counter]{Theorem}
\newtheorem{lemma}[lem_counter]{Lemma}
\newtheorem{corollary}[cor_counter]{Corollary}
\newtheorem{assumption}[ass_counter]{Assumption}
\newtheorem{remark}[rmk_counter]{Remark}
\begin{document}
  
\maketitle

\setcounter{footnote}{0}




\begin{abstract}
Risk management in dynamic decision problems is a primary concern in many fields, including financial investment, autonomous driving, and healthcare. The mean-variance function is one of the most widely used objective functions in risk management due to its simplicity and interpretability. Existing algorithms for mean-variance optimization are based on multi-time-scale stochastic approximation, whose learning rate schedules are often hard to tune, and have only asymptotic convergence proof. In this paper, we develop a model-free policy search framework for mean-variance optimization with finite-sample error bound analysis (to local optima). Our starting point is a reformulation of the original mean-variance function with its Legendre-Fenchel dual, from which we propose a stochastic block coordinate ascent policy search algorithm. Both the asymptotic convergence guarantee of the last iteration's solution and the convergence rate of the randomly picked solution are provided, and their applicability is demonstrated on several benchmark domains.
\end{abstract}

\section{Introduction}
\label{sec:intro}

Risk management plays a central role in sequential decision-making problems, common in fields such as portfolio management~\citep{lai2011mean}, autonomous driving~\citep{risk:autonomous:maurer2016}, and healthcare~\citep{risk:american2011}. A common risk-measure is the variance of the expected sum of rewards/costs and the mean-variance trade-off function~\citep{sobel1982variance,saferl:mannor2011} is one of the most widely used objective functions in risk-sensitive decision-making. Other risk-sensitive objectives have also been studied, for example,~\citet{saferl:borkar2002q} studied exponential utility functions,~\citet{saferl:castro2012} experimented with the Sharpe Ratio measurement,~\citet{Chow18RC} studied value at risk (VaR) and mean-VaR optimization,~\citet{saferl:cvar:chow2014},~\citet{Tamar15OC}, and~\citet{Chow18RC} investigated conditional value at risk (CVaR) and mean-CVaR optimization in a static setting, and~\citet{saferl:tamar2015coherent} investigated coherent risk for both linear and nonlinear system dynamics. 
Compared with other widely used performance measurements, such as the Sharpe Ratio and CVaR, the mean-variance measurement has explicit interpretability and computational advantages~\citep{meanvar:markowitz2000mean,meanvar:li2000optimal}. For example, the Sharpe Ratio tends to lead to solutions with less mean return~\citep{saferl:castro2012}. 
Existing mean-variance reinforcement learning (RL) algorithms~\citep{saferl:castro2012,saferl:prashanth2013,Prashanth16VC} often suffer from heavy computational cost, slow convergence, and difficulties in tuning their learning rate schedules. Moreover, all their analyses are asymptotic and no rigorous finite-sample complexity analysis has been reported. 
{Recently, \citet{pmlr-v75-dalal18a} provided a general approach to compute finite sample analysis in the case of linear multiple time scales stochastic approximation problems. However, existing multiple time scales algorithms like \citep{saferl:castro2012} consist of nonlinear term in its update, and cannot be analyzed via the method in \citet{pmlr-v75-dalal18a}.}
All these make it difficult to use them in real-world problems. The goal of this paper is to propose a mean-variance optimization algorithm that is both computationally efficient and has finite-sample analysis guarantees. This paper makes the following contributions: {\bf 1)} We develop a computationally efficient RL algorithm for mean-variance optimization.  By reformulating the mean-variance function with its Legendre-Fenchel dual~\citep{boyd}, we propose a new formulation for mean-variance optimization and use it to derive a computationally efficient algorithm that is based on stochastic cyclic block coordinate descent. 
{\bf 2)} We provide the sample complexity analysis of our proposed algorithm. This result is novel because although cyclic block coordinate descent algorithms usually have empirically better performance than randomized block coordinate descent algorithms, yet almost all the reported analysis of these algorithms are asymptotic~\citep{cd:bsg:xu2015}. 

Here is a roadmap for the rest of the paper. Section~\ref{sec:backgrounds} offers a brief background on risk-sensitive RL and stochastic variance reduction. In Section~\ref{sec:alg}, the problem is reformulated using the Legendre-Fenchel duality and a novel algorithm is proposed based on stochastic block coordinate descent. Section~\ref{sec:theory} contains the theoretical analysis of the paper that includes both asymptotic convergence and finite-sample error bound. The experimental results of Section~\ref{sec:experimental} validate the effectiveness of the proposed algorithms.


\section{Backgrounds}
\label{sec:backgrounds}

This section offers a brief overview of risk-sensitive RL, including the objective functions and algorithms. We then introduce block coordinate descent methods. Finally, we introduce the Legendre-Fenchel duality, the key ingredient in formulating our new algorithms.


\subsection{Risk-Sensitive Reinforcement Learning}
\label{sec:riskrl}

Reinforcement Learning (RL)~\citep{sutton-barto:book} is a class of learning problems in which an agent interacts with an unfamiliar, dynamic, and stochastic environment, where the agent's goal is to optimize some measures of its long-term performance. This interaction is conventionally modeled as a Markov decision process (MDP), defined as the tuple $({\mathcal{S},\mathcal{A},P_0,P_{ss'}^{a},r,\gamma})$, where $\mathcal{S}$ and $\mathcal{A}$ are the sets of states and actions, $P_0$ is the initial state distribution, $P_{ss'}^{a}$ is the transition kernel that specifies the probability of transition from state $s\in\mathcal{S}$ to state $s'\in\mathcal{S}$ by taking action $a\in\mathcal{A}$, $r(s,a):\mathcal{S}\times\mathcal{A}\to\mathbb{R}$ is the reward function bounded by $R_{\max}$, and $0\leq\gamma<1$ is a discount factor. A \textit{parameterized stochastic policy} $\pi_\theta(a|s):\mathcal{S}\times\mathcal{A}\to\left[{0,1}\right]$ is a probabilistic mapping from states to actions, where $\theta$ is the tunable parameter and $\pi_\theta(a|s)$ is a differentiable function w.r.t.~$\theta$. 

One commonly used performance measure for policies in {\em episodic} MDPs is the {\em return} or cumulative sum of rewards from the starting state, i.e.,~$R = \sum\nolimits_{k = 1}^{{\tau}} {r({s_k},{a_k})}$, 
where $s_1\sim P_0$ and $\tau$ is the first passage time to the recurrent state $s^*$~\citep{puterman,saferl:castro2012}, and thus, $\tau \coloneqq \min \{ k > 0\;|\;{s_k} = {s^*}\}$.  
In risk-neutral MDPs, the algorithms aim at finding a near-optimal policy that maximizes the expected sum of rewards $J(\theta) \coloneqq \mathbb{E}_{\pi_\theta}[R] = \mathbb{E}_{\pi_\theta}\big[\sum_{k = 1}^{{\tau}}{r({s_k},{a_k})}\big]$. We also define the square-return $M(\theta ) \coloneqq {\mathbb{E}_{{\pi _\theta }}}[{R^2}] = {\mathbb{E}_{{\pi _\theta }}}\Big[{\big(\sum_{k = 1}^{\tau}  {r({s_k},{a_k})}\big)^2}\Big]$. In the following, we sometimes drop the subscript $\pi_\theta$ to simplify the notation.
%
%
%

In risk-sensitive mean-variance optimization MDPs, the objective is often to maximize $J(\theta)$ with a variance constraint, i.e.,
\begin{equation}
\begin{aligned}
\max_{\theta} \quad & J(\theta) = \mathbb{E}_{\pi_\theta}[R] \\
\text{s.t.} \quad & \text{Var}_{\pi_\theta}(R) \leq \zeta,
\label{eq:opt-prob1}
\end{aligned}
\end{equation}
where $\text{Var}_{\pi_\theta}(R) = M(\theta) - J^2(\theta)$ measures the variance of the return random variable $R$, and $\zeta > 0$ is a given risk parameter~\citep{saferl:castro2012,saferl:prashanth2013}. Using the Lagrangian relaxation procedure~\citep{bertsekas:npbook}, we can transform the optimization problem~\eqref{eq:opt-prob1} to maximizing the following unconstrained objective function:
\begin{align}
{J_\lambda }(\theta ) \coloneqq & \mathbb{E}_{\pi_\theta}[R] - \lambda\big(\text{Var}_{\pi_\theta}(R) - \zeta) \\
= & J(\theta ) - \lambda\big(M(\theta ) - J(\theta)^2 - \zeta\big).
\label{eq:regobj}
\end{align}
It is important to note that the mean-variance objective function is \textit{NP-hard} in general~\citep{saferl:mannor2011}. The main reason for the hardness of this optimization problem is that although the variance satisfies a Bellman equation~\citep{sobel1982variance}, unfortunately, it lacks the monotonicity property of dynamic programming (DP), and thus, it is not clear how the related risk measures can be optimized by standard DP algorithms~\citep{sobel1982variance}. 

The existing methods to maximize the objective function~\eqref{eq:regobj} are mostly based on stochastic approximation that often converge to an equilibrium point of an ordinary differential equation (ODE)~\citep{borkar:book}. For example,~\citet{saferl:castro2012} proposed a policy gradient algorithm, a two-time-scale stochastic approximation, to maximize~\eqref{eq:regobj} for a fixed value of $\lambda$ (they optimize over $\lambda$ by selecting its best value in a finite set), while the algorithm in~\citet{saferl:prashanth2013} to maximize~\eqref{eq:regobj} is actor-critic and is a three-time-scale stochastic approximation algorithm (the third time-scale optimizes over $\lambda$). These approaches suffer from certain drawbacks: {\bf 1)} Most of the analyses of ODE-based methods are asymptotic, with no sample complexity analysis. {\bf 2)} It is well-known that multi-time-scale approaches are sensitive to the choice of the stepsize schedules, which is a non-trivial burden in real-world problems. {\bf 3)} The ODE approach does not allow extra penalty functions. Adding penalty functions can often strengthen the robustness of the algorithm, encourages sparsity and incorporates prior knowledge into the problem~\citep{esl:ElementsStatisticalLearning}. 

\subsection{Coordinate Descent Optimization}
\label{sec:bcd}

Coordinate descent (CD)\footnote{Note that since our problem is maximization, our proposed algorithms are block coordinate \emph{ascent}.} and the more general block coordinate descent (BCD) algorithms solve a minimization problem by iteratively updating variables along coordinate directions or coordinate hyperplanes~\citep{cd:overview:wright2015}. At each iteration of BCD, the objective function is (approximately) minimized w.r.t.~a coordinate or a block of coordinates by fixing the remaining ones, and thus, an easier lower-dimensional subproblem needs to be solved. A number of comprehensive studies on BCD have already been carried out, such as~\citet{cd:bcd:luo1992} and~\citet{cd:sbcd:nesterov2012} for convex problems, and~\citet{cd:bcd:tseng2001},~\citet{xu2013block}, and~\citet{razaviyayn2013unified} for nonconvex cases (also see~\citealt{cd:overview:wright2015} for a review paper). For stochastic problems with a block structure,~\citet{cd:sbmd:dang2015} proposed stochastic block mirror descent (SBMD) by combining BCD with stochastic mirror descent~\citep{MID:2003,nemirovski2009robust}. Another line of research on this topic is block stochastic gradient coordinate descent (BSG)~\citep{cd:bsg:xu2015}. The key difference between SBMD and BSG is that at each iteration, SBMD randomly picks one block of variables to update, while BSG cyclically updates all block variables. 

In this paper, we develop mean-variance optimization algorithms based on both nonconvex stochastic BSG and SBMD. Since it has been shown that the BSG-based methods usually have better empirical performance than their SBMD counterparts, the main algorithm we report, analyze, and evaluate in the paper is BSG-based. We report our SBMD-based algorithm in Appendix~\ref{sec:rcpg-sga} and use it as a baseline in the experiments of Section~\ref{sec:experimental}. The finite-sample analysis of our BSG-based algorithm reported in Section~\ref{sec:theory} is novel because although there exists such analysis for convex stochastic BSG methods~\citep{cd:bsg:xu2015}, we are not aware of similar results for their nonconvex version to the best our knowledge.

%

\section{Algorithm Design}
\label{sec:alg}
In this section, we first discuss the difficulties of using the regular stochastic gradient ascent to maximize the mean-variance objective function. We then propose a new formulation of the mean-variance objective function that is based on its Legendre-Fenchel dual and derive novel algorithms that are based on the recent results in stochastic nonconvex block coordinate descent. We conclude this section with an asymptotic analysis of a version of our proposed algorithm. 


\subsection{Problem Formulation}
\label{subsec:formulation}

In this section, we describe why the vanilla stochastic gradient cannot be used to maximize $J_\lambda(\theta)$ defined in Eq.~\eqref{eq:regobj}. Taking the gradient of $J_\lambda(\theta)$ w.r.t.~$\theta$, we have
\begin{align}
\nabla_\theta {J_\lambda }({\theta _t}) = & {\nabla _\theta }J({\theta _t}) - \lambda {\nabla _\theta }{\rm{Var}}(R) \\
= & {\nabla _\theta }J({\theta _t}) - \lambda\big(\nabla_{\theta} M(\theta) - 2J({\theta})\nabla_{\theta} J({\theta})\big).
\label{objgradient}
\end{align}
Computing $\nabla_\theta {J_\lambda }({\theta _t})$ in~\eqref{objgradient} involves computing three quantities: $\nabla_{\theta} J({\theta}), \nabla_{\theta} M({\theta})$, and $J({\theta})\nabla_{\theta} J({\theta})$. We can obtain unbiased estimates of $\nabla_{\theta}J(\theta)$ and $\nabla_{\theta}M(\theta)$ from a single trajectory generated by the policy $\pi_\theta$ 
using the likelihood ratio method~\citep{reinforce:williams1992}, as $\nabla_{\theta}J({\theta})=\mathbb{E}[{R_t}\omega_{t}(\theta)]$ and $\nabla_{\theta}M({\theta})=\mathbb{E}[R_t^2\omega_{t}(\theta)]$. Note that $R_t$ is the cumulative reward of the $t$-th episode, i.e.,~$R_t = \sum\nolimits_{k=1}^{{\tau_t}} r_k$, which is possibly a \textit{nonconvex} function, and $\omega _t(\theta ) = \sum _{k=1}^{{\tau_t}}{\nabla_{\theta}\ln{\pi_{\theta}}({a_{k}}|{s_{k}})}$ is the likelihood ratio derivative. In the setting considered in the paper, an episode is the trajectory between two visits to the recurrent state $s^*$. For example, the $t$-th episode refers to the trajectory between the ($t$-1)-th and the $t$-th visits to $s^*$. We denote by $\tau_t$ the length of this episode.

However, it is not possible to compute an unbiased estimate of $J(\theta)\nabla_{\theta} J(\theta)$ without having access to a generative model of the environment that allows us to sample at least two next states $s'$ for each state-action pair $(s, a)$. As also noted by~\citet{saferl:castro2012} and~\citet{saferl:prashanth2013}, computing an unbiased estimate of $J(\theta)\nabla_{\theta} J(\theta)$ requires double sampling (sampling from two different trajectories), and thus,  cannot be done using a single trajectory. To circumvent the double-sampling problem, these papers proposed multi-time-scale stochastic approximation algorithms, the former a policy gradient algorithm and the latter an actor-critic algorithm that uses simultaneous perturbation methods~\citep{Bhatnagar13SR}. However, as discussed in Section~\ref{sec:riskrl}, multi-time-scale stochastic approximation approach suffers from several weaknesses such as no available finite-sample analysis and difficult-to-tune stepsize schedules. To overcome these weaknesses, we reformulate the mean-variance objective function and use it to present novel algorithms with in-depth analysis in the rest of the paper.

\subsection{Block Coordinate Reformulation}

In this section, we present a new formulation for $J_\lambda(\theta)$ that is later used to derive our algorithms and do not suffer from the double-sampling problem in estimating $J(\theta)\nabla_{\theta} J(\theta)$. We begin with the following lemma.
\begin{lemma}
\label{lem:fenchel}
For the quadratic function $f(z)=z^2,\;z\in\mathbb{R}$, we define its Legendre-Fenchel dual as $f(z)=z^2=\mathop{\max}_{y\in \mathbb{R}} (2zy - {y^2})$.
\end{lemma}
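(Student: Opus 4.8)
The plan is to verify directly that the concave quadratic $g(y) = 2zy - y^2$ attains its maximum value $z^2$. The cleanest route is to \emph{complete the square} in the variable $y$, treating $z$ as a fixed parameter. First I would rewrite
\begin{equation}
2zy - y^2 = -\bigl(y^2 - 2zy\bigr) = -\bigl(y^2 - 2zy + z^2\bigr) + z^2 = z^2 - (y - z)^2.
\end{equation}
Since $(y-z)^2 \ge 0$ for all real $y$, with equality if and only if $y = z$, this identity shows $2zy - y^2 \le z^2$ for every $y \in \mathbb{R}$, and the bound is tight at $y^* = z$. Hence $\max_{y \in \mathbb{R}}(2zy - y^2) = z^2$, which is exactly the claimed identity.

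Alternatively, one could argue via first-order optimality: the map $y \mapsto 2zy - y^2$ is differentiable with derivative $2z - 2y$, so its unique stationary point is $y = z$; since the second derivative is $-2 < 0$, the function is strictly concave and this stationary point is the global maximizer, giving value $2z \cdot z - z^2 = z^2$. Both arguments identify the maximizing dual variable as $y^* = z$, a fact worth recording because it tells us the optimal auxiliary variable in the forthcoming reformulation equals the mean return $J(\theta)$ whose square we are replacing.

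There is essentially no analytic obstacle here — the statement is the elementary Legendre-Fenchel conjugate of a one-dimensional quadratic. The only point deserving a moment of care is justifying that the interior stationary point is a \emph{global} maximum over all of $\mathbb{R}$ rather than merely a local one; the completing-the-square form settles this at once, since $z^2 - (y-z)^2$ is exhibited as a global upper bound. The value of the lemma lies not in its difficulty but in its role: it lets us substitute $J(\theta)^2 = \max_{y}\bigl(2J(\theta)y - y^2\bigr)$, turning the troublesome term into a variational expression whose $\theta$-gradient is $2y\,\nabla_\theta J(\theta)$, linear in $\nabla_\theta J(\theta)$ and hence estimable from a single trajectory. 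I would therefore keep the proof short and emphasize the identification of the maximizer, since that is what later enables the block coordinate ascent over $(\theta, y)$ and sidesteps the double-sampling problem.
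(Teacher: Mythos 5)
Your proof is correct, and in fact the paper offers no proof at all---it simply states the lemma as a special case of Legendre--Fenchel duality with a citation to Boyd---so your completing-the-square computation (with the first-order-optimality variant) is precisely the elementary verification underlying that citation, and it rightly emphasizes the maximizer $y^{*}=z$, which is what makes the block-coordinate reformulation work. One small correction to your closing remark: in the paper's actual application the substitution is $z = J(\theta) + \frac{1}{2\lambda}$, so the optimal auxiliary variable is $y^{*} = J(\theta) + \frac{1}{2\lambda}$, not $J(\theta)$ itself.
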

This is a special case of the Lengendre-Fenchel duality~\citep{boyd} that has been used in several recent RL papers (e.g.,~\citealt{liu2015uai,vr:saddle:pe:du2017,liubo:jair:2018}). Let ${F_\lambda }(\theta ) \coloneqq {\left({J(\theta ) + \frac{1}{{2\lambda }}} \right)^2} - M(\theta )$, which follows ${F_\lambda }(\theta )=\frac{{{J_\lambda }(\theta)}}{\lambda } + \frac{1}{{4{\lambda ^2}}} - \zeta$.  Since $\lambda>0$ is a constant, maximizing $J_\lambda(\theta)$ is equivalent to maximizing ${F_\lambda }(\theta )$. 
Using Lemma~\ref{lem:fenchel} with $z = J(\theta ) + \frac{1}{2\lambda}$, we may reformulate ${F_\lambda }(\theta )$ as
\begin{align}
\label{eq:000}
{F_\lambda }(\theta) = \mathop {\max }_y \Big( {2y\big(J(\theta) + \frac{1}{{2\lambda }}\big) - {y^2}} \Big) - M(\theta).
\end{align}
%
Using~\eqref{eq:000}, the maximization problem $\mathop {\max }_{\theta} {F_\lambda }(\theta )$ is equivalent to
\begin{equation}
\begin{aligned}
\mathop {\max}_{\theta ,y} \qquad & {\hat f_\lambda }(\theta ,y), \\
\text{where} \qquad & {{{\hat f}_\lambda }(\theta ,y) \coloneqq 2y\big(J(\theta ) + \frac{1}{{2\lambda }}\big) - {y^2} - M(\theta )}.
\label{eq:max-alt}
\end{aligned}
\end{equation}
Our optimization problem is now formulated as the standard nonconvex coordinate ascent problem~\eqref{eq:max-alt}. We use three stochastic solvers to solve~\eqref{eq:max-alt}: SBMD method~\citep{cd:sbmd:dang2015}, BSG method~\citep{cd:bsg:xu2015}, and the vanilla stochastic gradient ascent (SGA) method~\citep{nemirovski2009robust}. We report our BSG-based algorithm in Section~\ref{subsec:MVPG} and leave the details of the SBMD and SGA based algorithms to Appendix~\ref{sec:rcpg-sga}. In the following sections, we denote by $\beta_t^\theta$ and $\beta_t^y$ the stepsizes of $\theta$ and $y$, respectively, and by the subscripts $t$ and $k$ the episode and time-step numbers. 

\subsection{Mean-Variance Policy Gradient}
\label{subsec:MVPG}

We now present our main algorithm that is based on a block coordinate update to maximize~\eqref{eq:max-alt}. Let ${g}_t^\theta$ and ${g}_t^y$ be block gradients and $\tilde{g}_t^\theta$ and $\tilde{g}_t^y$ be their sample-based estimations defined as 
\begin{align}
\label{def:tildeg_y}
g^y_t = \mathbb{E}[\tilde{g}^y_t] = 2 J(\theta_t) + \frac{1}{\lambda } - 2y_t \quad , \quad & \tilde{g}^y_t = 2 R_t + \frac{1}{\lambda } - 2y_t,\\
\label{def:tildeg_theta}
g^\theta_t = \mathbb{E}[\tilde g^\theta_t] = 2y_{t + 1}\nabla_{\theta}{J(\theta_t)} - \nabla_{\theta} M(\theta_t) \quad , \quad & \tilde{g}^\theta_t = \left( {2y_{t + 1}{R_t} - {{({R_t})}^2}} \right){\omega _t}({\theta _t}).
\end{align}
The block coordinate updates are
\begin{align}
\label{eq:updatelaw}
{y_{t+1}} = & {y_{t}}+{\beta^y_{t}}{\tilde{g}_t^y}, \\
{\theta_{t+1}} = & {\theta_{t}}+{\beta^\theta_{t}}{\tilde{g}_t^\theta}.
\end{align}
To obtain unbiased estimates of $g^y_t$ and $g^\theta_t$, we shall update $y$ (to obtain $y_{t+1}$) prior to computing $g^\theta_t$ at each iteration. Now it is ready to introduce the Mean-Variance Policy Gradient (\textbf{MVP}) Algorithm~\ref{alg:mvp}.
\begin{algorithm}[htb]
\caption{Mean-Variance Policy Gradient (\textbf{MVP})}
\label{alg:mvp}
\centering
\begin{algorithmic}[1]
\STATE {\bfseries Input:} Stepsizes $\{\beta_t^\theta\}$ and $\{\beta_t^y\}$, and number of iterations $N$ \\ 
$\quad$\textbf{Option I:} $\{\beta_t^\theta\}$ and $\{\beta_t^y\}$ satisfy the Robbins-Monro condition \\
$\quad$\textbf{Option II:} $\beta_t^\theta$ and $\beta_t^y$ are set to be constants
\FOR {episode $t=1,\dotsc,N$}
\STATE Generate the initial state $s_1\sim P_0$ 
\WHILE{$s_k \neq s^*$}
\STATE Take the action $a_k \sim \pi_{\theta_t}(a|s_k)$ and observe the reward $r_k$ and next state $s_{k+1}$
\ENDWHILE
\STATE Update the parameters
\begin{align}
{R_t} = & \sum_{k = 1}^{\tau_t}  {{r_k}} \\
{\omega _t}(\theta_t) = & \sum _{k=1}^{{\tau_t}}{\nabla_\theta \ln{\pi_{\theta_t}}({a_{k}}|{s_{k}})} \\
{y_{t+1}} = & {y_t} + {\beta _t^y}\left(2 R_t  + \frac{1}{\lambda } - 2y_t\right)\\
{\theta_{t+1}} = & {\theta _t} + {\beta _t^\theta}\left( {2y_{t+1}{R_t } - {{({R_t })}^2}} \right){\omega _t}({\theta _t})
\end{align}
\ENDFOR
\STATE {\bfseries Output} $\bar{x}_N$: \\
$\quad$\textbf{Option I:} Set  $\bar{x}_N = x_N = [\theta_N,y_N]^\top$ \\
$\quad$\textbf{Option II:} Set $\bar{x}_N = x_z= [\theta_z,y_z]^\top$, where $z$ is uniformly drawn from $\{ 1,2, \ldots ,N\}$
\end{algorithmic}
\end{algorithm}
Before presenting our theoretical analysis, we first introduce the assumptions needed for these results. 
\begin{assumption}[\textbf{Bounded Gradient and Variance}]
\label{asm3}
There exist constants $G$ and $\sigma$ such that
\begin{align}
\|\nabla_{y}\hat f_{\lambda}(x)\|_2 \leq G, ~ & \|\nabla_{\theta} \hat f_{\lambda} (x)\|_2 \leq G, \\
\mathbb{E}[\|\Delta^y_t\|_2^2] \leq {\sigma}^2, ~ & \mathbb{E}[\|\Delta^\theta_t\|_2^2] \leq {\sigma}^2,
\end{align}
for any $t$ and $x$, where $\|\cdot\|_2$ denotes the Euclidean norm, ${\Delta}^y_t \coloneqq \tilde{g}^y_t - {g}^y_t$ and ${\Delta}^\theta_t \coloneqq \Tilde{g}^\theta_t - {g}^\theta_t$.
\label{asm:var}
\end{assumption}
Assumption~\ref{asm:var} is standard in nonconvex coordinate descent algorithms~\citep{cd:bsg:xu2015,cd:sbmd:dang2015}. We also need the following assumption that is standard in the policy gradient literature.
\begin{assumption}[\textbf{Ergodicity}]
The Markov chains induced by all the policies generated by the algorithm are ergodic, i.e.,~irreducible, aperiodic, and recurrent.
\label{asm:erg}
\end{assumption}
In practice, we can choose either Option I with the result of the final iteration as output or Option II with the result of a randomly selected iteration as output. In what follows in this section, we report an asymptotic convergence analysis of MVP with Option I, and in Section~\ref{sec:theory}, we derive a finite-sample analysis of MVP with Option II. 
\begin{theorem}[\textbf{Asymptotic Convergence}]
\label{thm:asyn}
Let $\big\{x_t=(\theta_t, y_t)\big\}$ be the sequence of the outputs generated by Algorithm~\ref{alg:mvp} with Option I. If $\{\beta_t^\theta\}$ and $\{\beta_t^y\}$ are time-diminishing real positive sequences satisfying the Robbins-Monro condition, i.e.,~$\sum_{t = 1}^\infty  {\beta _t^\theta }  = \infty$, $\sum_{t = 1}^\infty  {{{(\beta _t^\theta )}^2}}  < \infty$, $\sum_{t = 1}^\infty  {\beta _t^y}  = \infty$, and $\sum_{t = 1}^\infty  {{{(\beta _t^y)}^2}}  < \infty$,  then Algorithm~\ref{alg:mvp} will converge such that $\;\lim_{t\rightarrow \infty}\mathbb{E}[\|\nabla\hat{f}_{\lambda}(x_t)  \|_2] = 0$.
\end{theorem}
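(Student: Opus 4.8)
The plan is to treat Algorithm~\ref{alg:mvp} as a two-block Gauss--Seidel stochastic gradient ascent on the smooth objective $\hat f_\lambda$, and to use $\hat f_\lambda$ itself as a Lyapunov (potential) function. Before anything else I would record two structural facts. First, $\hat f_\lambda$ is bounded above: since the rewards are bounded by $R_{\max}$ and Assumption~\ref{asm:erg} makes the expected first-passage time finite, both $J(\theta)$ and $M(\theta)$ are bounded, while the $-y^2$ term makes $\hat f_\lambda$ concave and coercive in $y$; hence $\sup_x \hat f_\lambda(x) =: f^\star < \infty$. Second, $\nabla\hat f_\lambda$ is Lipschitz with some constant $L$ on the region visited by the iterates, the $y$-block being exactly quadratic ($\partial_{yy}\hat f_\lambda \equiv -2$) and the $\theta$-block inheriting smoothness from the policy-gradient terms together with the bounded-gradient half of Assumption~\ref{asm:var}.

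The observation that makes the block structure benign is that the algorithm updates $y$ \emph{before} $\theta$, so the block gradient $g_t^\theta = 2y_{t+1}\nabla_\theta J(\theta_t) - \nabla_\theta M(\theta_t)$ is precisely $\nabla_\theta\hat f_\lambda(\theta_t, y_{t+1})$, the partial gradient at the intermediate point. I would therefore split the one-step change as $\hat f_\lambda(x_{t+1}) - \hat f_\lambda(x_t) = \big[\hat f_\lambda(\theta_t, y_{t+1}) - \hat f_\lambda(\theta_t, y_t)\big] + \big[\hat f_\lambda(\theta_{t+1}, y_{t+1}) - \hat f_\lambda(\theta_t, y_{t+1})\big]$ and expand each bracket, using the exact quadratic identity in $y$ for the first and the Lipschitz ascent lemma in $\theta$ for the second. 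Taking conditional expectations and invoking unbiasedness ($g_t^y = \mathbb{E}[\tilde g_t^y]$, $g_t^\theta = \mathbb{E}[\tilde g_t^\theta]$) with the variance bound $\mathbb{E}\|\Delta_t\|_2^2 \le \sigma^2$ yields a one-step inequality of the form
\begin{equation}
\mathbb{E}[\hat f_\lambda(x_{t+1})] \ge \mathbb{E}[\hat f_\lambda(x_t)] + \beta_t^y\,\mathbb{E}\|g_t^y\|_2^2 + \beta_t^\theta\,\mathbb{E}\|g_t^\theta\|_2^2 - C\big((\beta_t^y)^2 + (\beta_t^\theta)^2\big),
\end{equation}
for a constant $C$ depending on $L$, $G$, and $\sigma$. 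A short extra step relates $\|g_t^\theta\|_2 = \|\nabla_\theta\hat f_\lambda(\theta_t,y_{t+1})\|_2$ back to $\|\nabla_\theta\hat f_\lambda(x_t)\|_2$, since their difference equals $2(y_{t+1}-y_t)\nabla_\theta J(\theta_t)$ and is $O(\beta_t^y)$, contributing only a further $O((\beta_t^y)^2)$ correction.

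Summing from $t=1$ to $\infty$, the left-hand side telescopes and is bounded by $f^\star - \hat f_\lambda(x_1) < \infty$, while $\sum_t\big((\beta_t^y)^2 + (\beta_t^\theta)^2\big) < \infty$ by the Robbins--Monro condition; this gives the summability $\sum_t\big(\beta_t^y\,\mathbb{E}\|\nabla_y\hat f_\lambda(x_t)\|_2^2 + \beta_t^\theta\,\mathbb{E}\|\nabla_\theta\hat f_\lambda(x_t)\|_2^2\big) < \infty$. Combined with $\sum_t\beta_t^y = \sum_t\beta_t^\theta = \infty$, it forces $\liminf_t\mathbb{E}\|\nabla\hat f_\lambda(x_t)\|_2^2 = 0$, which is the $\liminf$ version of the claim.

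The main obstacle is upgrading this $\liminf$ to a genuine limit. I would control the oscillation of the deterministic sequence $a_t := \mathbb{E}\|\nabla\hat f_\lambda(x_t)\|_2^2$ by a Zoutendijk/Bertsekas-style crossing argument: Lipschitzness of $\nabla\hat f_\lambda$ and the bounded-gradient/variance assumptions give $|a_{t+1}-a_t| \le 2GL\,\mathbb{E}\|x_{t+1}-x_t\|_2 \le C'(\beta_t^y + \beta_t^\theta) \to 0$, so if $\limsup_t a_t$ were some $2\epsilon>0$ there would be infinitely many index intervals over which $a_t$ climbs from below $\epsilon$ to above $2\epsilon$; on each such interval the accumulated step mass $\sum(\beta_t^y+\beta_t^\theta)$ is bounded below by a fixed positive constant, forcing $\sum_t\beta_t a_t = \infty$ and contradicting the summability above. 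Hence $\lim_t a_t = 0$, and Jensen's inequality ($\mathbb{E}\|\nabla\hat f_\lambda(x_t)\|_2 \le \sqrt{a_t}$) delivers $\lim_t\mathbb{E}\|\nabla\hat f_\lambda(x_t)\|_2 = 0$. I expect the two most delicate points to be (i) justifying the Lipschitz/boundedness facts despite the bilinear $y\,\nabla_\theta J$ coupling, where the boundedness half of Assumption~\ref{asm:var} is essential, and (ii) running the crossing argument with two possibly unequal step-size sequences simultaneously.
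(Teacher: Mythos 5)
Your proposal is correct and matches the paper's own proof in all essentials: the same Gauss--Seidel splitting of $\hat f_\lambda(x_{t+1})-\hat f_\lambda(x_t)$ with the Lipschitz ascent lemma, the same telescoping-plus-Robbins--Monro argument yielding $\sum_t \beta_t\,\mathbb{E}\|g_t^i\|_2^2<\infty$, the same $O(\beta_t^y)$ correction relating $g_t^\theta=\nabla_\theta\hat f_\lambda(\theta_t,y_{t+1})$ back to $\nabla_\theta\hat f_\lambda(x_t)$, and your crossing argument is precisely the content of the paper's Lemma~\ref{lem:1} (the paper resolves your delicate point (ii) by quietly adding the assumption $0<\inf_t \beta_t^\theta/\beta_t^y\le\sup_t \beta_t^\theta/\beta_t^y<\infty$ so that $\beta_t^{\max},\beta_t^{\min}$ are also Robbins--Monro). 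The one nuance is your appeal to exact unbiasedness of $\tilde g_t^\theta$: since $y_{t+1}$ is built from the same sample $R_t$, the conditional bias $\mathbb{E}[\Delta_t^\theta\mid\mathbf\Xi_{[t-1]}]$ is nonzero but only $O(\beta_t^{\max})$ (the paper's Lemma~\ref{lem:err_bound} and Lemma~\ref{lem:prod}), producing an extra summable $O(\beta_t^\theta\beta_t^{\max})$ term that your error budget absorbs.
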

The proof of Theorem~\ref{thm:asyn} follows from the analysis in~\citet{xu2013block}. Due to space constraint, we report it in Appendix~\ref{sec:theory_app}. 

Algorithm~\ref{alg:mvp} is a special case of nonconvex block stochastic gradient (BSG) methods. To the best of our knowledge, no finite-sample analysis has been reported for this class of algorithms. Motivated by the recent papers by~\citet{nemirovski2009robust},~\citet{ghadimi2013stochastic},~\citet{cd:bsg:xu2015}, and~\citet{cd:sbmd:dang2015}, in Section~\ref{sec:theory}, we provide a finite-sample analysis for general nonconvex block stochastic gradient methods and apply it to Algorithm~\ref{alg:mvp} with Option II.


\section{Finite-Sample Analysis}
\label{sec:theory}
\vspace{-0.3cm}
In this section, we first present a finite-sample analysis for the general class of nonconvex BSG algorithms~\citep{xu2013block}, for which there are no established results, in Section~\ref{sec:fs-bsg}. We then use these results and prove a finite-sample bound for our MVP algorithm with Option II, that belongs to this class, in Section~\ref{sec:fs-mvp}. Due to space constraint, we report the detailed proofs in Appendix~\ref{sec:theory_app}.


\subsection{Finite-Sample Analysis of Nonconvex BSG Algorithms}
\label{sec:fs-bsg}

In this section, we provide a finite-sample  analysis of the general nonconvex block stochastic gradient (BSG) method, where the problem formulation is given by
\begin{align}
\min_{x \in \mathbb{R}^{n}} ~ f(x) = \mathbb{E}_{\xi}[F(x,\xi)].
\label{eq:obj}
\end{align}
$\xi$ is a random vector, and $F(\cdot,\xi): \mathbb{R}^{n} \rightarrow \mathbb{R}$ is continuously differentiable and possibly nonconvex for every $\xi$. 
%
The variable $x \in \mathbb{R}^{n}$ can be partitioned into $b$ disjoint blocks as $x = \{x^1,x^2,\dotsc,x^b\}$, where $x^i \in \mathbb{R}^{n_i}$ denotes the $i$-th block of variables, and $\sum_{i = 1}^{b}n_i = n$.  
For simplicity, we use $x^{<i}$ for $(x_i,\dots,x_{i-1})$, and $x^{\leq i}$,$x^{>i}$, and $x^{\geq i}$ are defined correspondingly.
We also use $\nabla_{x^i}$ to denote $\frac{\partial}{\partial x^i}$  for the partial gradient with respect to $x^i$.  $\Xi_t$ is the sample set generated at $t$-th iteration, and $\mathbf \Xi_{[t]} = (\Xi_1,\dots,\Xi_t)$ denotes the history of sample sets from the first through $t$-th iteration. $\{\beta_t^i: i = 1,\cdots,b\}_{t = 1}^{\infty}$ are denoted as the stepsizes. Also, let $\beta_t^{\max} = \max_{i}\beta_t^i$, and $\beta_t^{\min} = \min_{i}\beta_t^i$.
Similar to Algorithm \ref{alg:mvp}, the BSG algorithm cyclically updates all blocks of variables in each iteration, and the detailed algorithm for BSG method is presented in Appendix \ref{sec:probsg}.

Without loss of generality, we assume a fixed update order in the BSG algorithm.
Let $\Xi_t = \{\xi_{t,1},\dotsc,\xi_{t,m_t}\}$ be the samples in the $t$-th iteration with size $m_t \geq 1$. 
Therefore, the stochastic partial gradient is computed as
$
\tilde g_t^i = \frac{1}{m_t} \sum_{l = 1}^{m_t}\nabla_{x^i}F(x_{t+1}^{<i}, x_t^{\geq i}; \xi_{t,l}).
$
Similar to Section \ref{sec:alg}, we define $g_t^i = \nabla_{x^i}f(x_{t+1}^{<i}, x_t^{\geq i})$, and the approximation error as $\Delta_t^i = \tilde g_t^i - g_t^i$.
%
We assume that the objective function $f$ is bounded and Lipschitz smooth, i.e., there exists a positive Lipschitz constant $L > 0$ such that
$
\|\nabla_{x^i}f(x) - \nabla_{x^i}f(y)\|_2 \leq L \|x - y\|_2
$,
 $\forall i \in \{1,\dots,b\}$ and  $\forall x,y \in \mathbb R^n$. Each block gradient of $f$ is also bounded, i.e., there exist a positive constant $G$ such that
$
\|\nabla_{x^i}f(x)\|_2 \leq G
$,
for any $i \in \{1,\dots,b\}$ and any $x \in \mathbb R^n$.
We also need Assumption \ref{asm3} for all block variables, i.e., $\mathbb{E}[\|\Delta_t^i\|_2] \leq \sigma$, for any $i$ and $t$. Then we have the following lemma.
\begin{lemma}
\label{lem:err_bound}
For any $i$ and $t$, there exist a positive constant $A$, such that
\begin{align}
\|\mathbb{E}[\Delta_t^i | \mathbf \Xi_{[t - 1]}]\|_2 \leq A\beta_t^{\max}.
\label{eq:stocerrbd}
\end{align}
\end{lemma}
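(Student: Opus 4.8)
The plan is to show that the entire conditional bias of $\tilde g_t^i$ as an estimate of $g_t^i$ stems from a single source: the partial gradient is evaluated at $(x_{t+1}^{<i}, x_t^{\geq i})$, whose leading blocks $x_{t+1}^{<i}$ have \emph{already been updated using the same sample set} $\Xi_t$ that enters $\tilde g_t^i$. Were those blocks frozen at their start-of-iteration values $x_t^{<i}$, the estimator would be exactly unbiased; the bias is therefore governed by how far the iterate moves within the current sweep, which is of order $\beta_t^{\max}$.

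First I would quantify the within-iteration displacement. Writing $d_t^{<i} \coloneqq x_{t+1}^{<i} - x_t^{<i}$, the cyclic update gives $\|x_{t+1}^j - x_t^j\|_2 = \beta_t^j\|\tilde g_t^j\|_2 \le \beta_t^{\max}\|\tilde g_t^j\|_2$ for each $j<i$, so $\|d_t^{<i}\|_2 \le \beta_t^{\max}\sum_{j<i}\|\tilde g_t^j\|_2$ by subadditivity of the norm. Using $\|g_t^j\|_2 \le G$ together with $\mathbb{E}[\|\Delta_t^j\|_2\mid \mathbf{\Xi}_{[t-1]}]\le\sigma$ yields $\mathbb{E}[\|\tilde g_t^j\|_2 \mid \mathbf{\Xi}_{[t-1]}] \le G+\sigma$, and hence
\begin{align}
\mathbb{E}\big[\|d_t^{<i}\|_2 \,\big|\, \mathbf{\Xi}_{[t-1]}\big] \le (i-1)(G+\sigma)\,\beta_t^{\max} \le (b-1)(G+\sigma)\,\beta_t^{\max}.
\end{align}

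Next I would decompose the conditional bias around the frozen point $x_t$. Introducing the stochastic gradient evaluated at $x_t$, namely $\hat g_t^i \coloneqq \tfrac{1}{m_t}\sum_{l=1}^{m_t}\nabla_{x^i}F(x_t;\xi_{t,l})$, and using that $x_t$ is $\mathbf{\Xi}_{[t-1]}$-measurable while the samples $\xi_{t,l}$ are drawn afresh and independent of $\mathbf{\Xi}_{[t-1]}$ (so $\mathbb{E}[\hat g_t^i \mid \mathbf{\Xi}_{[t-1]}] = \nabla_{x^i}f(x_t)$), I can write
\begin{align}
\mathbb{E}[\Delta_t^i \mid \mathbf{\Xi}_{[t-1]}] = \underbrace{\mathbb{E}[\tilde g_t^i - \hat g_t^i \mid \mathbf{\Xi}_{[t-1]}]}_{T_1} + \underbrace{\big(\nabla_{x^i}f(x_t) - \mathbb{E}[g_t^i \mid \mathbf{\Xi}_{[t-1]}]\big)}_{T_2}.
\end{align}
The term $T_2$ compares two deterministic gradients of $f$ at points differing only by $d_t^{<i}$, so Lipschitz smoothness of $f$ gives $\|T_2\|_2 \le L\,\mathbb{E}[\|d_t^{<i}\|_2 \mid \mathbf{\Xi}_{[t-1]}]$. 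For $T_1$, the two stochastic gradients are built from the \emph{same} samples at points differing by $d_t^{<i}$, so a realization-by-realization Lipschitz bound yields $\|T_1\|_2 \le L\,\mathbb{E}[\|d_t^{<i}\|_2 \mid \mathbf{\Xi}_{[t-1]}]$. Combining with the displacement estimate gives the claim with, e.g., $A = 2L(b-1)(G+\sigma)$.

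The main obstacle is $T_1$: because $x_{t+1}^{<i}$ is correlated with the samples $\xi_{t,l}$, one cannot pass the expectation through and recover an unbiased estimate, and must instead bound $\nabla_{x^i}F(x_{t+1}^{<i},x_t^{\geq i};\xi_{t,l}) - \nabla_{x^i}F(x_t;\xi_{t,l})$ pathwise. This needs the constant $L$ to control the \emph{per-sample} gradient maps $\nabla_{x^i}F(\cdot;\xi)$, not merely their average $\nabla_{x^i}f$, so I would invoke (or make explicit) that each realization $F(\cdot;\xi)$ shares the smoothness constant $L$. A secondary point to verify is that $\|g_t^j\|_2\le G$ and $\mathbb{E}[\|\Delta_t^j\|_2]\le\sigma$ hold \emph{conditionally} on $\mathbf{\Xi}_{[t-1]}$, which is what keeps the displacement estimate valid inside the outer conditional expectation.
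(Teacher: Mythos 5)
Your proof is correct, but it takes a genuinely different route from the paper's. The paper's proof works by a ghost-sample coupling: it expands both $\mathbb{E}[\tilde g_t^i \mid \mathbf \Xi_{[t-1]}]$ and $\mathbb{E}[g_t^i \mid \mathbf \Xi_{[t-1]}]$ over the sample distribution, introducing an independent copy $\Xi'_t$ of the mini-batch so that the bias becomes a double expectation of $\nabla_{x^i}F(x_{\Xi_t,t+1}^{<i}, x_t^{\geq i};\xi_j) - \nabla_{x^i}F(x_{\Xi'_t,t+1}^{<i}, x_t^{\geq i};\xi_j)$, bounded pathwise via per-sample Lipschitz smoothness and the deterministic per-sample gradient bound $\|\tilde g_t^l\|_2 \leq G$, giving $A = 2LbG$ without ever invoking $\sigma$. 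You instead anchor at the frozen iterate $x_t$ through the auxiliary unbiased estimator $\hat g_t^i$ and split the bias into a stochastic-gradient drift $T_1$ and a deterministic-gradient drift $T_2$, each controlled by $L\,\mathbb{E}[\|d_t^{<i}\|_2 \mid \mathbf \Xi_{[t-1]}]$. Your decomposition buys a cleaner argument: it avoids the discrete-$\xi$ bookkeeping and the independent copy, and it isolates exactly where per-realization smoothness of $F(\cdot;\xi)$ is indispensable ($T_1$ only; smoothness of $f$ alone handles $T_2$). The price is that your displacement estimate uses $\mathbb{E}[\|\Delta_t^j\|_2 \mid \mathbf \Xi_{[t-1]}] \leq \sigma$, a conditional strengthening of Assumption 1 that you correctly flag; you could remove it altogether by using the same pathwise per-sample bound $\|\nabla_{x^i}F(\cdot;\xi)\|_2 \leq G$ that the paper relies on, which gives $\|\tilde g_t^j\|_2 \leq G$ deterministically and improves your constant to $A = 2L(b-1)G$, marginally better than the paper's $2LbG$. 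Finally, your main caveat is well placed and applies equally to the paper: the constants $L$ and $G$ must control each realization $F(\cdot;\xi)$, not merely the averaged $f$; the paper asserts this is inherited from the smoothness of $f$, but it does not follow and is in truth an additional (if mild) assumption that both proofs require.
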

The proof of Lemma~\ref{lem:err_bound} is in Appendix~\ref{sec:probsg}.
It should be noted that in practice, it is natural to take the final iteration's result as the output as in Algorithm~\ref{alg:mvp}. However, a standard strategy for analyzing nonconvex optimization methods is to pick up one previous iteration's result randomly according to a discrete probability distribution over $\{1,2,\dotsc, N\}$~\citep{nemirovski2009robust,ghadimi2013stochastic,cd:sbmd:dang2015}.
Similarly, our finite-sample analysis is based on the strategy that randomly pick up $\bar x_{N} = x_{z}$ according to
\begin{align}
\Pr(z = t) = \frac{\beta_t^{\min} - \frac{ L }{2}(\beta_t^{\max})^2}{\sum_{t = 1}^N (\beta_t^{\min} - \frac{ L }{2}(\beta_t^{\max})^2)}, \ t = 1,\dotsc, N.
\label{eq:prob-z}
\end{align}
Now we provide the finite-sample analysis result for the general nonconvex BSG algorithm as in~\citep{cd:bsg:xu2015}.
\begin{theorem}
\label{thm:cyclicbcd}
Let the output of the nonconvex BSG algorithm be $\bar{x}_N = x_z$ according to Eq.~\eqref{eq:prob-z}. If stepsizes satisfy $2\beta_t^{\min} > L(\beta_t^{\max})^2$ for $t = 1,\cdots,N$, then we have 
\begin{align}
&\mathbb{E}\left[\|\nabla f(\bar{x}_N)\|_2^2\right] \leq \frac{ f(x_1) - f^* + \sum_{t = 1}^N (\beta_t^{\max})^2 C_t}{\sum_{t = 1}^N (\beta_t^{\min} - \frac{ L }{2}(\beta_t^{\max})^2)},
\label{eq:cyclicbcd}
\end{align}
where $f^* = \max_{x} f(x)$. 
%
$
C_t =  (1 - \frac{ L }{2}\beta_t^{\max}) \sum_{i = 1}^{b}  L  \sqrt{\sum_{j < i} (G^2 + \sigma^2) } + b\left( A G + \frac{L}{2} \sigma^2 \right),
$
where $G$ is the gradient bound, $L$ is the Lipschitz constants, $\sigma$ is the variance bound, and $A$ is defined in Eq.~\eqref{eq:stocerrbd}.
\end{theorem}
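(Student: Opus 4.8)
The plan is to follow the standard ``descent-lemma plus randomized-output'' template for nonconvex stochastic gradient methods (as in~\citet{ghadimi2013stochastic}), adapted to the cyclic block structure in which within-iteration sample reuse breaks the usual martingale argument. First I would set up per-block bookkeeping: let $x_t^{[i]} = (x_{t+1}^{<i}, x_t^{\geq i})$ denote the iterate after blocks $1,\dots,i$ have been updated at iteration $t$, so that $x_t^{[0]} = x_t$, $x_t^{[b]} = x_{t+1}$, and $x_t^{[i]}$ differs from $x_t^{[i-1]}$ only in block $i$ by $-\beta_t^i \tilde g_t^i$. Applying the block-Lipschitz smoothness of $f$ to each single-block step and using $g_t^i = \nabla_{x^i} f(x_t^{[i-1]})$ gives
\begin{align}
f(x_t^{[i]}) \le f(x_t^{[i-1]}) - \beta_t^i \langle g_t^i, \tilde g_t^i\rangle + \tfrac{L}{2}(\beta_t^i)^2 \|\tilde g_t^i\|_2^2 ,
\end{align}
and summing over $i=1,\dots,b$ telescopes to a one-iteration decrease bound for $f(x_{t+1}) - f(x_t)$, in which $-\beta_t^i\langle g_t^i, \tilde g_t^i\rangle = -\beta_t^i\|g_t^i\|_2^2 - \beta_t^i\langle g_t^i, \Delta_t^i\rangle$.

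The central difficulty, and the step I expect to be the main obstacle, is taking the conditional expectation given $\mathbf{\Xi}_{[t-1]}$. In plain SGD the cross term vanishes because the gradient is evaluated at a history-measurable point; here $g_t^i = \nabla_{x^i} f(x_t^{[i-1]})$ depends on the \emph{current} iteration's samples (the same set $\Xi_t$ is reused across blocks), so $\mathbb{E}[\Delta_t^i \mid \mathbf{\Xi}_{[t-1]}] \neq 0$. I would control the cross term in two pieces. Writing $g_t^i = \nabla_{x^i} f(x_t) + (g_t^i - \nabla_{x^i} f(x_t))$, the first piece contributes $\langle \nabla_{x^i} f(x_t), \mathbb{E}[\Delta_t^i \mid \mathbf{\Xi}_{[t-1]}]\rangle$, bounded by $G\,A\beta_t^{\max}$ via the gradient bound and Lemma~\ref{lem:err_bound}; this is the source of the $b\,AG$ summand in $C_t$. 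The second piece uses the displacement bound $\|g_t^i - \nabla_{x^i} f(x_t)\|_2 \le L\|x_t^{[i-1]} - x_t\|_2 \le L\beta_t^{\max}\sqrt{\sum_{j<i}\|\tilde g_t^j\|_2^2}$, together with $\mathbb{E}\|\tilde g_t^j\|_2^2$ bounded in terms of $G^2 + \sigma^2$ and Jensen's inequality, to produce the $\sum_i L\sqrt{\sum_{j<i}(G^2+\sigma^2)}$ term; the leftover second-order noise $\tfrac{L}{2}(\beta_t^i)^2\|\Delta_t^i\|_2^2$ yields the $b\,\tfrac{L}{2}\sigma^2$ term. Collecting these gives a per-iteration inequality of the form $\mathbb{E}[f(x_{t+1}) - f(x_t) \mid \mathbf{\Xi}_{[t-1]}] \le -\big(\beta_t^{\min} - \tfrac{L}{2}(\beta_t^{\max})^2\big)\|\nabla f(x_t)\|_2^2 + (\beta_t^{\max})^2 C_t$, where the coefficient on $\|\nabla f(x_t)\|_2^2$ follows from the elementary bound $\beta_t^i(1 - \tfrac{L}{2}\beta_t^i) \ge \beta_t^{\min} - \tfrac{L}{2}(\beta_t^{\max})^2$ and from replacing each $\|g_t^i\|_2^2$ by $\|\nabla_{x^i} f(x_t)\|_2^2$ up to the staleness error already absorbed into $C_t$.

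Finally I would take total expectation, sum over $t=1,\dots,N$, and telescope the left-hand side, using the boundedness of $f$ to collapse the telescoped terminal value to $f^*$, obtaining
\begin{align}
\sum_{t=1}^N \Big(\beta_t^{\min} - \tfrac{L}{2}(\beta_t^{\max})^2\Big)\,\mathbb{E}\big[\|\nabla f(x_t)\|_2^2\big] \le f(x_1) - f^* + \sum_{t=1}^N (\beta_t^{\max})^2 C_t .
\end{align}
The stepsize condition $2\beta_t^{\min} > L(\beta_t^{\max})^2$ makes every coefficient on the left strictly positive, so both dividing through and treating~\eqref{eq:prob-z} as a genuine probability distribution are legitimate. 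The randomized output rule~\eqref{eq:prob-z} is engineered precisely so that $\mathbb{E}\big[\|\nabla f(\bar x_N)\|_2^2\big]$ equals the left-hand side divided by $\sum_{t=1}^N\big(\beta_t^{\min} - \tfrac{L}{2}(\beta_t^{\max})^2\big)$, which delivers the claimed bound~\eqref{eq:cyclicbcd}.
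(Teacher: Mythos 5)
Your argument is sound and shares the paper's overall skeleton---block-wise descent lemma, telescoping over the blocks, absorbing all $O((\beta_t^{\max})^2)$ terms into $C_t$, then summing over $t$ and invoking the randomized output rule \eqref{eq:prob-z}---but you route the crux of the cyclic analysis, the biased cross term $\langle g_t^i,\Delta_t^i\rangle$, genuinely differently. The paper's proof of Theorem~\ref{thm:cyclicbcd} applies Lemma~\ref{lem:vecdoterr} directly with $u_t=g_t^i$, even though $g_t^i=\nabla_{x^i}f(x_{t+1}^{<i},x_t^{\geq i})$ depends on the current sample set $\Xi_t$ through $x_{t+1}^{<i}$, so that lemma's hypothesis that $u_t$ be $\mathbf\Xi_{[t-1]}$-measurable is strained; it then accounts for staleness only at the end, when converting $\mathbb{E}[\|g_t^i\|_2]$ into $\mathbb{E}[\|\nabla_{x^i}f(x_t)\|_2]$ in Eq.~\eqref{eq:gradieq}, which is where the $\sum_{i=1}^{b} L\sqrt{\sum_{j<i}(G^2+\sigma^2)}$ term of $C_t$ originates. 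You instead split $g_t^i=\nabla_{x^i}f(x_t)+(g_t^i-\nabla_{x^i}f(x_t))$ inside the cross term, apply the bias bound of Lemma~\ref{lem:err_bound} legitimately to the history-measurable piece (yielding $AG\beta_t^{\max}$), and control the displacement piece by Lipschitz smoothness plus Cauchy--Schwarz; this is exactly the decomposition the paper itself uses in proving Theorem~\ref{thm:asyn}, transplanted into the finite-sample setting. What your route buys is a measurability-clean application of the bias lemma; what it costs is constants: your displacement piece carries an extra factor of $\sigma$ (from $\mathbb{E}[\|\Delta_t^i\|_2^2]^{1/2}$), and substituting $\|\nabla_{x^i}f(x_t)\|_2^2$ for $\|g_t^i\|_2^2$ at the squared-norm level introduces an additional $O(\beta_t^{\max})$ error of size roughly $2GL\sqrt{\sum_{j<i}(G^2+\sigma^2)}$, so the $C_t$ you obtain does not match the theorem's $C_t$ coefficient-for-coefficient. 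This mismatch should not trouble you: the paper's own bookkeeping is looser still (Eq.~\eqref{eq:gradbd} equates $\|\nabla f(x_t)\|_2^2$ with $\sum_{i}\|\nabla_{x^i}f(x_t)\|_2$, conflating norms with squared norms), and both derivations deliver the same $O\bigl(1/(N\beta_t^{\min})\bigr)+O(\beta_t^{\max})$ bound with identical dependence on $G$, $\sigma$, $L$, $A$, and $b$.
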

As a special case, we discuss the convergence rate with constant stepsizes $\mathcal O(1/\sqrt{N})$ in Corollary~\ref{cor:cyclicrate}, which implies that the sample complexity $N = \mathcal O(1 / \varepsilon^2)$ in order to find $\varepsilon$-stationary solution of problem \eqref{eq:obj}.
\begin{corollary}
\label{cor:cyclicrate}
If we take constant stepsize such that $\beta_t^i = \beta^i = \mathcal O(1/\sqrt{N})$ for any $t$, and let $\beta^{\max} \coloneqq \max_i \beta^i$, $\beta^{\min} \coloneqq \min_i \beta^i$, then we have 
$
\mathbb{E}\left[\|\nabla f(\bar{x}_N)\|_2^2\right] \leq \mathcal O \left( \sqrt{\frac{f(x_1) - f^* + C}{N}} \right),
$
where $C_t$ in Eq.~\eqref{eq:cyclicbcd} reduces to a constant $C$ defined as
$
C =  (1 - \frac{ L }{2}\beta^{\max}) \sum_{i = 1}^{b} L \sqrt{\sum_{j < i} (G^2 + \sigma^2) } + b \left( A G + \frac{L}{2} \sigma^2 \right).
$
\end{corollary}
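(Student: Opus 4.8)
The plan is to obtain the corollary as a direct specialization of Theorem~\ref{thm:cyclicbcd}: because the stepsizes are held constant across iterations, every $t$-dependent quantity in the bound~\eqref{eq:cyclicbcd} collapses, the summations reduce to multiplications by $N$, and the claimed $\mathcal{O}(1/\sqrt N)$ rate then falls out by tracking the $N$-dependence of each factor under the scaling $\beta^i=\mathcal{O}(1/\sqrt N)$.

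First I would verify that the hypothesis of Theorem~\ref{thm:cyclicbcd}, namely $2\beta^{\min}>L(\beta^{\max})^2$, holds under the chosen scaling. Writing $\beta^{\min}=\Theta(1/\sqrt N)$ and $(\beta^{\max})^2=\Theta(1/N)$, the left-hand side is $\Theta(1/\sqrt N)$ while the right-hand side is $\Theta(1/N)$, so the condition is satisfied for all sufficiently large $N$; this guarantees the denominator in~\eqref{eq:cyclicbcd} is positive and the theorem is applicable. Next, since $\beta_t^i=\beta^i$ does not depend on $t$, we have $\beta_t^{\max}=\beta^{\max}$ and $\beta_t^{\min}=\beta^{\min}$ for every $t$, and consequently the per-iteration constant $C_t$ in Theorem~\ref{thm:cyclicbcd} becomes the single $t$-independent constant $C$ stated in the corollary. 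The two summations therefore simplify to $\sum_{t=1}^N(\beta^{\max})^2 C=N(\beta^{\max})^2C$ and $\sum_{t=1}^N\big(\beta^{\min}-\tfrac{L}{2}(\beta^{\max})^2\big)=N\big(\beta^{\min}-\tfrac{L}{2}(\beta^{\max})^2\big)$, so the bound reduces to a single closed-form fraction with no residual sums.

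It then remains to read off the asymptotic order. Substituting $\beta^{\max},\beta^{\min}=\Theta(1/\sqrt N)$ gives $(\beta^{\max})^2=\Theta(1/N)$, so the numerator $f(x_1)-f^*+N(\beta^{\max})^2 C$ becomes $f(x_1)-f^*+\Theta(1)\cdot C$, i.e.\ a quantity of order $f(x_1)-f^*+C$ with no growth in $N$. In the denominator, the term $N\beta^{\min}=\Theta(\sqrt N)$ dominates the correction $\tfrac{L}{2}N(\beta^{\max})^2=\Theta(1)$, so the denominator is $\Theta(\sqrt N)$. Dividing, the bound is of order $(f(x_1)-f^*+C)/\sqrt N$, which matches the displayed $\mathcal{O}\big(\sqrt{(f(x_1)-f^*+C)/N}\big)$ once the $N$-independent bracket is treated as a fixed problem constant and absorbed into the hidden constant, completing the argument.

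The computation is routine; the only point demanding care is the applicability check, i.e.\ confirming that the stepsize condition $2\beta^{\min}>L(\beta^{\max})^2$ and hence the positivity of the denominator survive the $1/\sqrt N$ scaling for large $N$, since otherwise the fraction in~\eqref{eq:cyclicbcd} could be vacuous or sign-indefinite. I would also flag that, because $f(x_1)-f^*+C$ is a fixed constant independent of $N$, the literal reduced fraction $(f(x_1)-f^*+C)/\sqrt N$ and the square-root form in the statement collapse to the same $\mathcal{O}(1/\sqrt N)$ order, from which the advertised sample complexity $N=\mathcal{O}(1/\varepsilon^2)$ for an $\varepsilon$-stationary point follows immediately.
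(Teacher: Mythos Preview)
Your proposal is correct and follows essentially the same route as the paper: specialize Theorem~\ref{thm:cyclicbcd} to constant stepsizes, collapse the sums to $N$ times a constant, and read off the $\mathcal O(1/\sqrt N)$ rate from the $\Theta(1)/\Theta(\sqrt N)$ ratio. Your treatment is in fact slightly more careful than the paper's, since you explicitly verify the stepsize condition $2\beta^{\min}>L(\beta^{\max})^2$ holds for large $N$ before invoking the theorem.
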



\subsection{Finite-Sample Analysis of Algorithm~\ref{alg:mvp}}
\label{sec:fs-mvp}

We present the major theoretical results of this paper, i.e., the finite-sample analysis of Algorithm~\ref{alg:mvp} with Option II. The proof of Theorem \ref{thm:cyclicbcd2} is in Appendix~\ref{sec:theory_app}.
\begin{theorem}
\label{thm:cyclicbcd2}
Let the output of the Algorithm~\ref{alg:mvp} be $\bar{x}_N$ as in Theorem~\ref{thm:cyclicbcd}. 
If $\{\beta_t^{\theta}\}$, $\{\beta_t^{y}\}$ are constants as in Option II in Algorithm~\ref{alg:mvp}, and also satisfies $2\beta_t^{\min} > L(\beta_t^{\max})^2$ for $t = 1,\cdots,N$, we have 
\begin{align}
&\mathbb{E}\left[\|\nabla \hat f_{\lambda}(\bar{x}_N)\|_2^2\right] \leq \dfrac{\hat{f}_\lambda^* - \hat{f}_\lambda(x_1) + N (\beta_t^{\max})^2 C}{N (\beta_t^{\min} - \frac{L}{2}(\beta_t^{\max})^2)}
\label{eq:thm2}
\end{align}
where $\hat{f}_{\lambda}^* = \max_{x} \hat f_\lambda (x)$, and
\begin{align}
C = & (1 - \dfrac{L}{2}\beta_t^{\max})(L^2 \beta_{t}^{\max}(G^2 + \sigma^2) + L(2G^2 + \sigma^2)) + A G + L\sigma^2 + 2L(1 + L\beta_t^{\max})(3 \sigma^2 + 2 G^2).  
\label{eq:C}
\end{align}
\end{theorem}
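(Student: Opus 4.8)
The plan is to recognize that Algorithm~\ref{alg:mvp} is exactly the nonconvex BSG scheme of Section~\ref{sec:fs-bsg} specialized to $b=2$ blocks, with $y$ the first block (updated to $y_{t+1}$ before $\theta$) and $\theta$ the second, so that $g_t^\theta = \nabla_\theta \hat f_\lambda(\theta_t, y_{t+1})$ is precisely the partial gradient evaluated at the already-updated first block, as the framework of Theorem~\ref{thm:cyclicbcd} requires. Since MVP maximizes $\hat f_\lambda$ whereas that theorem is phrased for minimization, I would set $f=-\hat f_\lambda$, so the numerator turns into the nonnegative optimality gap $\hat f_\lambda^* - \hat f_\lambda(x_1)$ appearing in \eqref{eq:thm2}. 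The first task is to verify the hypotheses in this instance: Assumption~\ref{asm:var} supplies the gradient bound $G$ and variance bound $\sigma$; Lipschitz smoothness of $\hat f_\lambda$ with constant $L$ follows from the explicit form \eqref{eq:max-alt}, whose block Hessians are constant or built from $\nabla_\theta J$ and $\nabla_\theta M$; and Lemma~\ref{lem:err_bound} furnishes the conditional-bias control $\|\mathbb{E}[\Delta_t^i\mid \mathbf \Xi_{[t-1]}]\|_2 \leq A\beta_t^{\max}$.

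Rather than literally plugging $b=2$ into the generic $C_t$, I would re-run the descent argument with the two-block structure so the constants emerge in closed form. Concretely, I start from the smoothness ascent inequality
\[
\hat f_\lambda(x_{t+1}) \geq \hat f_\lambda(x_t) + \langle \nabla \hat f_\lambda(x_t),\, x_{t+1}-x_t\rangle - \tfrac{L}{2}\|x_{t+1}-x_t\|_2^2,
\]
substitute $y_{t+1}-y_t=\beta_t^y\tilde g_t^y$ and $\theta_{t+1}-\theta_t=\beta_t^\theta\tilde g_t^\theta$ from \eqref{def:tildeg_y}--\eqref{def:tildeg_theta}, and take the conditional expectation given $\mathbf \Xi_{[t-1]}$. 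The linear term splits over the two blocks: the $y$-block is unbiased, while in the $\theta$-block I would replace $g_t^\theta$ by the estimator's conditional mean at the cost of the bias $A\beta_t^{\max}$ from Lemma~\ref{lem:err_bound}—this is exactly the bias induced because $y_{t+1}$ and $\tilde g_t^\theta$ are built from the same trajectory $R_t,\omega_t(\theta_t)$—and bound the quadratic term using $\mathbb{E}\|\tilde g_t^i\|_2^2 \leq G^2+\sigma^2$.

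The one genuinely MVP-specific step, and the place I expect most of the technical work, is reconciling the block gradient $g_t^\theta=\nabla_\theta\hat f_\lambda(\theta_t,y_{t+1})$ actually used by the update with the start-of-iteration gradient $\nabla_\theta\hat f_\lambda(\theta_t,y_t)$ that appears when lower-bounding $\|\nabla\hat f_\lambda(x_t)\|_2^2$. Their difference is controlled by $L\|y_{t+1}-y_t\|_2 = L\beta_t^y\|\tilde g_t^y\|_2$, and expanding the resulting cross terms inside $\langle\nabla\hat f_\lambda(x_t),x_{t+1}-x_t\rangle$ is what produces the $L^2\beta_t^{\max}(G^2+\sigma^2)$ term and the extra $O(\beta_t^{\max})$ contribution $2L(1+L\beta_t^{\max})(3\sigma^2+2G^2)$ in \eqref{eq:C}; pinning down every $G^2$, $\sigma^2$, and $A$ coefficient here is the crux. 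Collecting terms should yield, for each $t$, a one-step inequality of the form $(\beta_t^{\min}-\tfrac{L}{2}(\beta_t^{\max})^2)\,\mathbb{E}\|\nabla\hat f_\lambda(x_t)\|_2^2 \leq \mathbb{E}[\hat f_\lambda(x_{t+1})-\hat f_\lambda(x_t)] + (\beta_t^{\max})^2 C$.

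Finally, I would sum over $t=1,\dots,N$ so the function values telescope to $\hat f_\lambda^*-\hat f_\lambda(x_1)$, obtaining $\sum_t(\beta_t^{\min}-\tfrac{L}{2}(\beta_t^{\max})^2)\,\mathbb{E}\|\nabla\hat f_\lambda(x_t)\|_2^2 \leq \hat f_\lambda^*-\hat f_\lambda(x_1)+\sum_t(\beta_t^{\max})^2C$. By the definition of the random index $z$ in \eqref{eq:prob-z}, the left-hand side equals $\big(\sum_t(\beta_t^{\min}-\tfrac{L}{2}(\beta_t^{\max})^2)\big)\,\mathbb{E}\|\nabla\hat f_\lambda(\bar x_N)\|_2^2$, and dividing through gives \eqref{eq:thm2}; specializing to the constant stepsizes of Option~II collapses the sums into the $N(\cdots)$ form stated. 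The hypothesis $2\beta_t^{\min}>L(\beta_t^{\max})^2$ is used precisely to keep the denominator positive so the bound is meaningful.
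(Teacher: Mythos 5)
Your proposal is correct and follows essentially the same route as the paper's own proof: the paper likewise derives the one-step block ascent inequality from Lipschitz smoothness plus the bias bound of Lemma~\ref{lem:err_bound} (arriving at Eq.~\eqref{eq:rearr}), reconciles $\nabla_\theta\hat f_\lambda(\theta_t,y_t)$ with the block gradient $g_t^\theta=\nabla_\theta\hat f_\lambda(\theta_t,y_{t+1})$ via $L\|y_{t+1}-y_t\|_2 = L\beta_t^y\|\tilde g_t^y\|_2$ in Eq.~\eqref{eq:newbd}, and then sums, telescopes to $\hat f_\lambda^*-\hat f_\lambda(x_1)$, and applies the sampling distribution \eqref{eq:prob-z}, which is uniform under Option~II's constant stepsizes. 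The only difference is minor bookkeeping of which correction term ($L^2\beta_t^{\max}(G^2+\sigma^2)$ versus $2L(1+L\beta_t^{\max})(3\sigma^2+2G^2)$) arises at which stage, which does not change the argument.
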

\vspace{-0.5cm}
\begin{proof}[Proof Sketch]
The proof follows the following major steps.

\textbf{(\RomanNum{1})}. First, we need to prove the bound of each block coordinate gradient, i.e., $\mathbb{E}[\|g_t^\theta\|_2^2]$ and $\mathbb{E}[ \|g_t^y\|_2^2]$, which is bounded as
\begin{align}
&(\beta_t^{\min} - \dfrac{L}{2}(\beta_t^{\max})^2)\mathbb{E}[\|g_t^\theta\|_2^2 + \|g_t^y\|_2^2]  \\
\leq&  \mathbb{E}[\hat{f}_\lambda(x_{t + 1})] - \mathbb{E}[\hat{f}_\lambda(x_t)] + (\beta_t^{\max})^2 A M_\rho + L(\beta_t^{\max})^2\sigma^2  
 + 2L\beta_t^{\max}(\beta_t^{\max} + L(\beta_t^{\max})^2)(3 \sigma^2 + 2 G^2).
\end{align}
Summing up over $t$, we have
\begin{align}
&\sum_{t = 1}^N (\beta_t^{\min} - \dfrac{L}{2}(\beta_t^{\max})^2)\mathbb{E}[\|g_t^\theta\|_2^2 + \|g_t^y\|_2^2]  \\
\leq&  \hat{f}_\lambda^* - \hat{f}_\lambda(x_1) +\sum_{t = 1}^N [ (\beta_t^{\max})^2 A G + L(\beta_t^{\max})^2\sigma^2  + 2L\beta_t^{\max}(\beta_t^{\max} + L(\beta_t^{\max})^2)(3 \sigma^2 + 2 G^2)].
\end{align}
%
\textbf{(\RomanNum{2})}. Next, we need to bound $\mathbb{E}[\|\nabla\hat{f}_\lambda(x_t) \|_2^2]$ using $\mathbb{E}[\|g_t^\theta\|_2^2 + \|g_t^y\|_2^2]$, which is proven to be
\begin{align}
& \mathbb{E}[\|\nabla\hat{f}_\lambda(x_t) \|_2^2] \leq L^2 (\beta_{t}^{\max})^2(G^2 + \sigma^2) + L\beta_{t}^{\max}(2G^2 + \sigma^2) + \mathbb{E}[\|g_t^\theta\|_2^2 + \|g_t^y\|_2^2].
\end{align}
\textbf{(\RomanNum{3})}. Finally, combining (\RomanNum{1}) and (\RomanNum{2}), and rearranging the terms, Eq.~\eqref{eq:thm2} can be obtained as a special case of Theorem~\ref{thm:cyclicbcd}, which completes the proof. 
\end{proof}



\section{Experimental Study}
\label{sec:experimental}

In this section, we evaluate our MVP algorithm with Option I in three risk-sensitive domains: the portfolio management~\citep{saferl:castro2012}, the American-style option~\citep{tamar2014scaling}, and the optimal stopping~\citep{saferl:cvar:chow2014,Chow18RC}. The baseline algorithms are the vanilla policy gradient (PG), the mean-variance policy gradient in~\citet{saferl:castro2012}, the stochastic gradient ascent (SGA) applied to our optimization problem~\eqref{eq:max-alt}, and the randomized coordinate ascent policy gradient (RCPG), i.e.,~the SBMD-based version of our algorithm. Details of SGA and RCPG can be found in Appendix~\ref{sec:rcpg-sga}. For each algorithm, we optimize its Lagrangian parameter $\lambda$ by grid search and report the mean and variance of its return random variable as a Gaussian.\footnote{Note that the return random variables are not necessarily Gaussian, we only use Gaussian for presentation purposes.} Since the algorithms presented in the paper (MVP and RCPG) are policy gradient, we only compare them with Monte-Carlo based policy gradient algorithms and do not use any actor-critic algorithms, such as those in~\citet{saferl:prashanth2013} and TRPO~\citep{trpo:schulman:2015}, in the experiments. 

\begin{figure}[t]
\begin{subfigure}{0.32\linewidth}
\includegraphics[width=\linewidth]{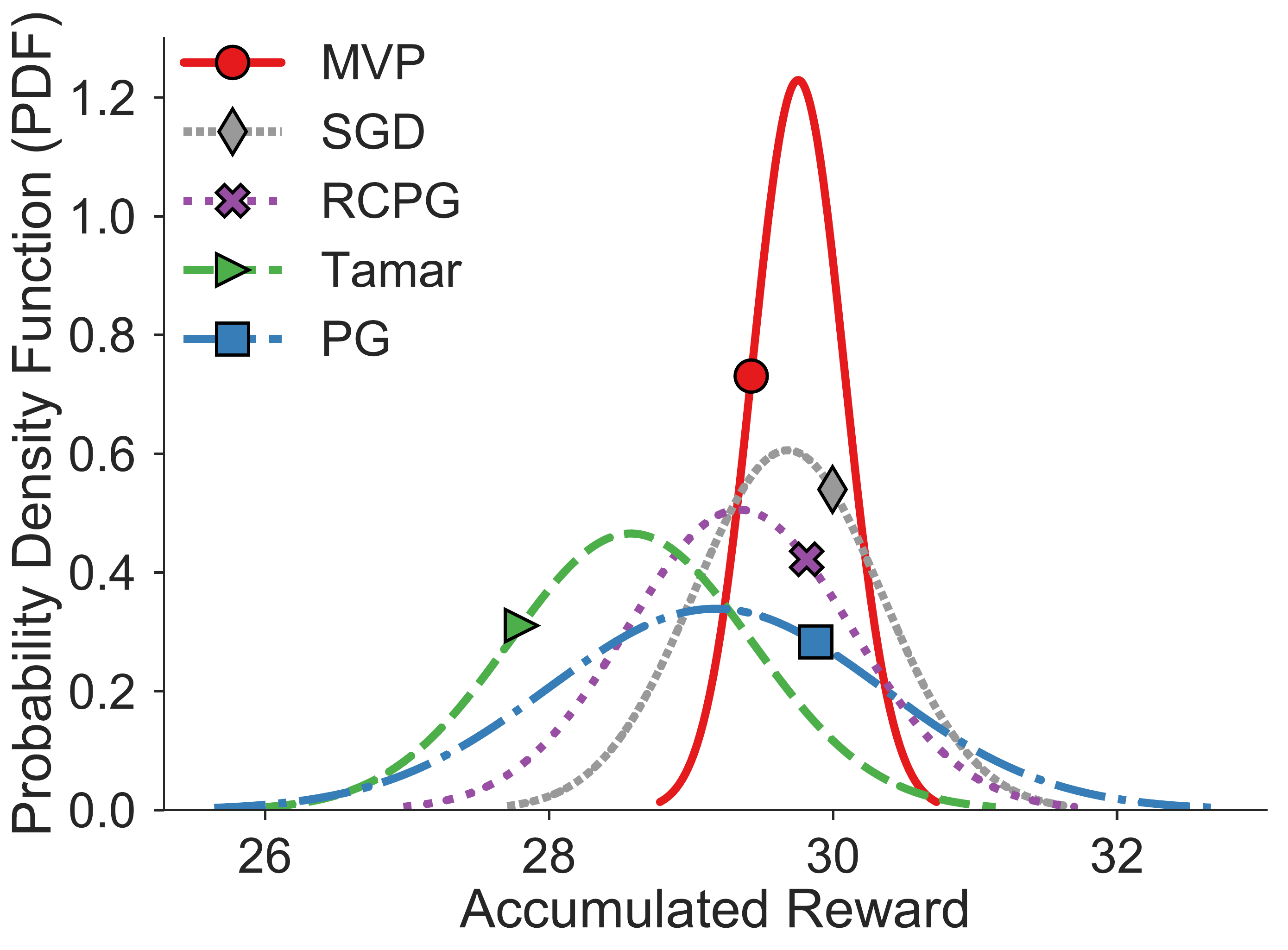}
\caption{Portfolio management domain}
\label{fig:PF}
\end{subfigure}
\begin{subfigure}{0.32\linewidth}
\includegraphics[width=\linewidth]{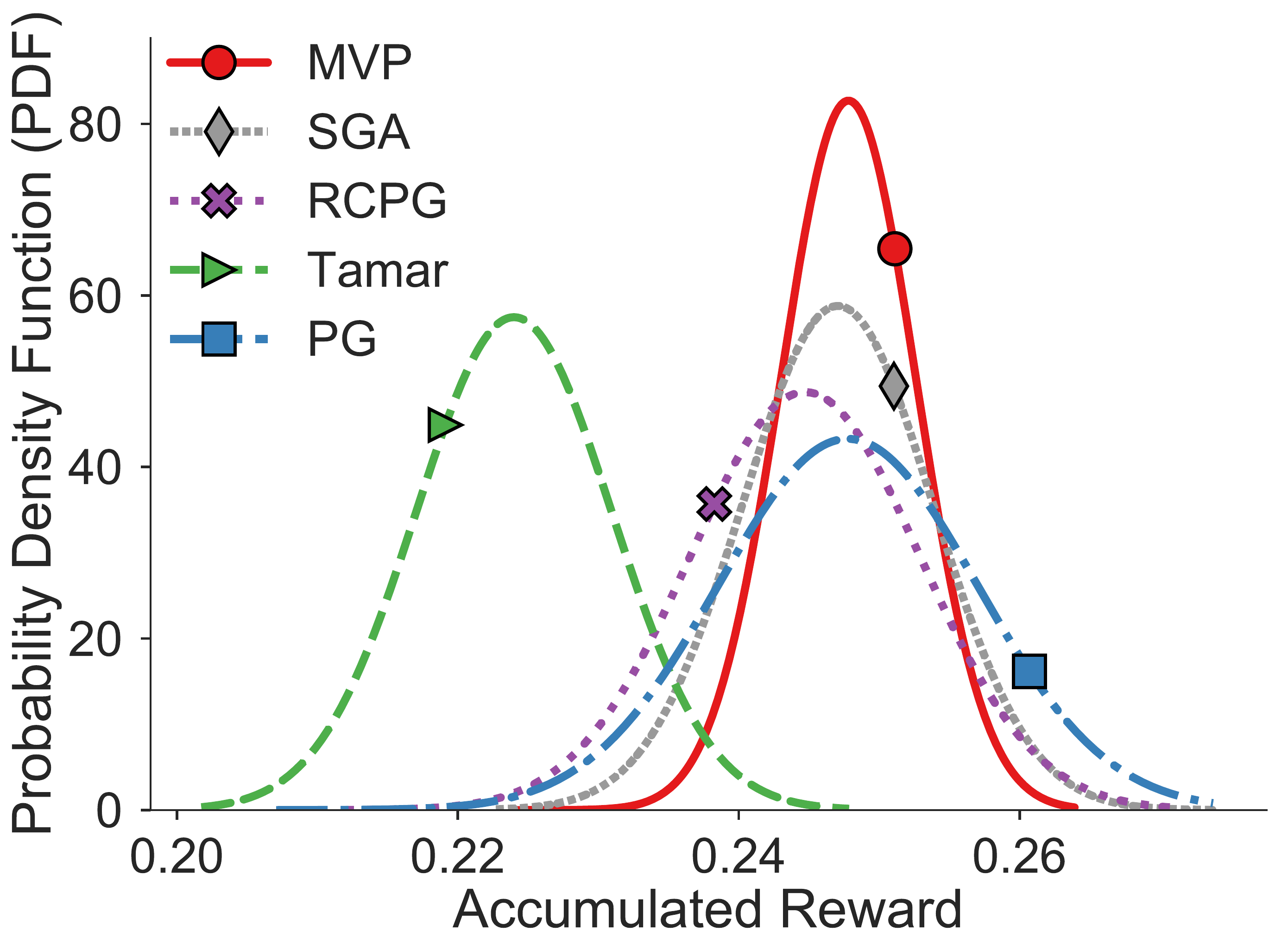}
\caption{American-style option domain}
\label{fig:AO}
\end{subfigure}
\begin{subfigure}{0.32\linewidth}
\includegraphics[width=\linewidth]{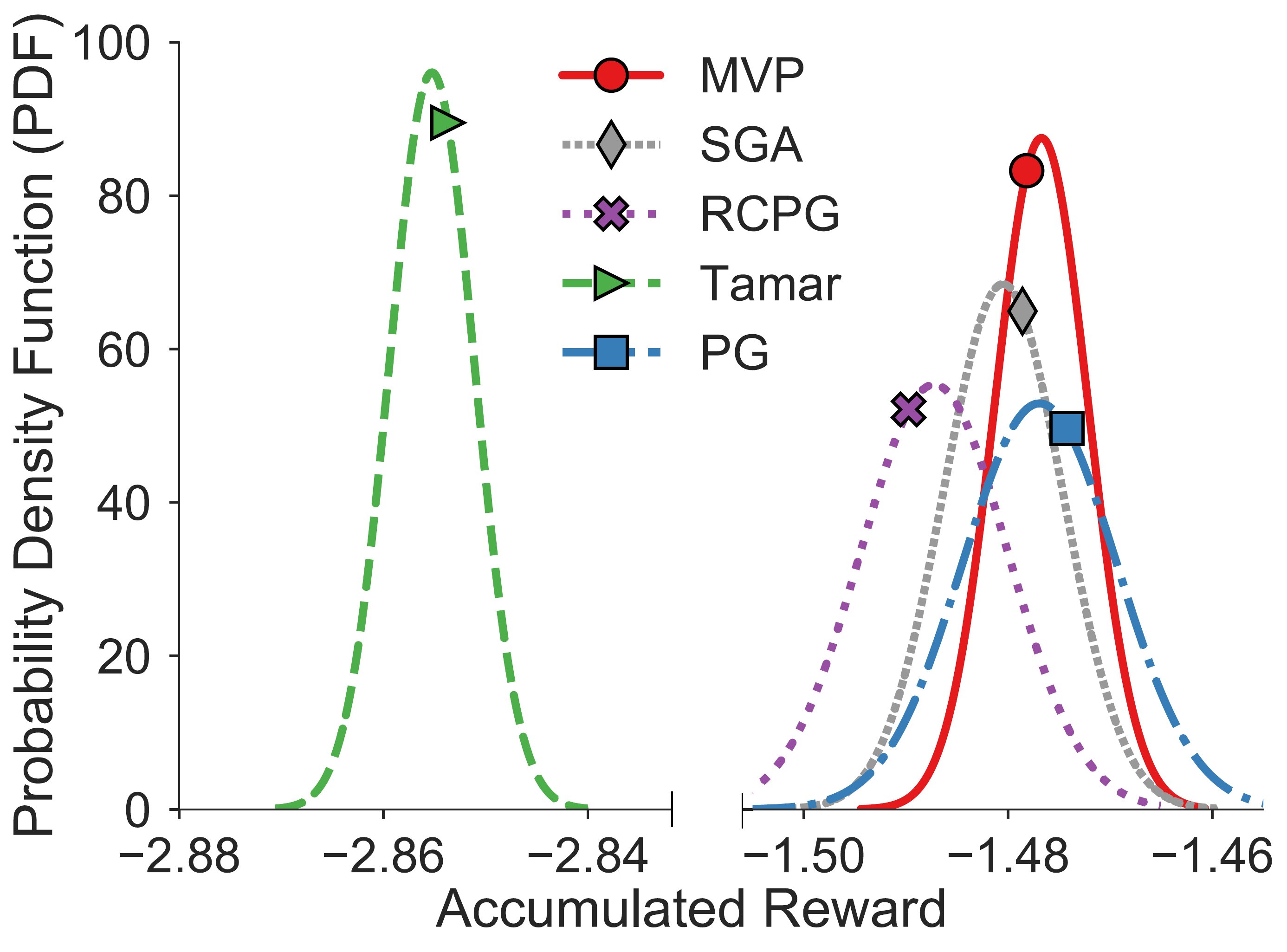}
\caption{Optimal stopping domain}
\label{fig:OS}
\end{subfigure}
\caption{Empirical results of the distributions of the return (cumulative rewards) random variable. Note that markers only indicate different methods.}
\end{figure}

\subsection{Portfolio Management}
%
The portfolio domain~\cite{saferl:castro2012} is composed of the liquid and non-liquid assets. A liquid asset has a fixed interest rate $r_l$ and can be sold at any time-step $k \le {\tau}$. A non-liquid asset can be sold only after a fixed period of $W$ time-steps with a time-dependent interest rate $r_{\text{nl}}(k)$, which can take either $r_{\text{nl}}^{\text{low}}$ or $r_{\text{nl}}^{\text{high}}$, and the transition follows a switching probability $p_{\text{switch}}$. The non-liquid asset also suffers a default risk (i.e.,~not being paid) with a probability $p_{\text{risk}}$. All investments are in liquid assets at the initial time-step $k = 0$. At the $k$-th step, the state is denoted by $x(k)\in \mathbb{R}^{W+2}$, where $x_1 \in [0,1]$ is the portion of the investment in liquid assets, $x_2,\cdots,x_{W+1} \in [0,1]$ is the portion in non-liquid assets with time to maturity of $1,\cdots, W$ time-steps, respectively, and $x_{W+2}(k) = r_{\text{nl}}(k)-\mathbb{E}[r_{\text{nl}}(k)]$. 
The investor can choose to invest a fixed portion $\eta$ ($0<\eta<1$) of his total available cash in the non-liquid asset or do nothing.
More details about this domain can be found in~\citet{saferl:castro2012}.
%
Figure~\ref{fig:PF} shows the results of the algorithms. PG has a large variance and the Tamar's method has the lowest mean return. The results indicate that MVP yields a higher mean return with less variance compared to the competing algorithms.

\subsection{American-style Option}

An American-style option~\cite{tamar2014scaling} is a contract that gives the buyer the right to buy or sell the asset at a strike price $W$ at or before the maturity time ${\tau}$. The initial price of the option is $x_{0}$, and the buyer has bought a put option with the strike price $W_{\text{put}}<x_{0}$ and a call option with the strike price $W_{\text{call}}>x_{0}$. 
%
At the $k$-th step ($k \le \tau$), the state is $\{x_{k},k\}$, where $x_k$ is the current price of the option. The action $a_{k}$ is either executing the option or holding it. $x_{k+1}$ is $f_u x_k$ w.p.~$p$ and $f_d x_k$ w.p.~$1 - p$, where $f_u$ and $f_d$ are constants. The reward is $0$ unless an option is executed and the reward for executing an option is ${r_k} = \max (0,{W_{{\rm{put}}}} - {x_k}) + \max (0,{x_k} - {W_{{\rm{call}}}})$. More details about this domain can be found in~\citet{tamar2014scaling}.
%
Figure~\ref{fig:AO} shows the performance of the algorithms. The results suggest that MVP can yield a higher mean return with less variance compared to the other algorithms. 

\subsection{Optimal Stopping}
The optimal stopping problem~\citep{saferl:cvar:chow2014,Chow18RC} is a continuous state domain. At the $k$-th time-step ($k \le \tau$, $\tau$ is the stopping time), the state is $\{x_k, k\}$, where $x_k$ is the cost. The buyer decide either to accept the present cost or wait. If the buyer accepts or when $k = T$, the system reaches a terminal state and the cost $x_k$ is received,
otherwise, the buyer receives the cost $p_h$ and the new state is $\{x_{k+1}, k+1\}$, where $x_{k+1}$ is $f_{u}x_{k}$ w.p.~$p$ and $f_{d}x_{k}$ w.p. $1-p$ ($f_{u} > 1$ and $f_{d}<1$ are constants). More details about this domain can be found in~\citet{saferl:cvar:chow2014}.
Figure~\ref{fig:OS} shows the performance of the algorithms. The results indicate that MVP is able to yield much less variance without affecting its mean return. 
We also summarize the performance of these algorithms in all three risk-sensitive domains as Table \ref{tb:meanvartb}, where Std is short for Standard Deviation. 

\begin{table}[htb]
\centering
\begin{tabular}{|c|c|c|c|c|c|c|}
\hline
\multirow{2}{*}{\textbf{}} & \multicolumn{2}{c|}{\textbf{Portfolio Management}} & \multicolumn{2}{c|}{\textbf{American-style Option}} & \multicolumn{2}{c|}{\textbf{Optimal Stopping}} \\ \cline{2-7} 
                           & \textbf{Mean}      & \textbf{Std}   & \textbf{Mean}      & \textbf{Std}    & \textbf{Mean}    & \textbf{Std} \\ \hline
\textbf{MVP}               & \textbf{29.754}    & \textbf{0.325}                & \textbf{0.2478}    & \textbf{0.00482}               & \textbf{-1.4767} & 0.00456                     \\ \hline
\textbf{PG}                & 29.170             & 1.177                         & 0.2477             & 0.00922                        & -1.4769          & 0.00754                     \\ \hline
\textbf{Tamar}             & 28.575             & 0.857                         & 0.2240             & 0.00694                        & -2.8553          & \textbf{0.00415}            \\ \hline
\textbf{SGA}               & 29.679             & 0.658                         & 0.2470             & 0.00679                        & -1.4805          & 0.00583                     \\ \hline
\textbf{RCPG}              & 29.340             & 0.789                         & 0.2447             & 0.00819                        & -1.4872          & 0.00721                     \\ \hline
\end{tabular}
\vspace{\baselineskip}
\caption{Performance Comparison among Algorithms}
\label{tb:meanvartb}
\end{table}


\section{Conclusion}
\label{sec:conclusion}
\vspace{-3mm}
This paper is motivated to provide a risk-sensitive policy search algorithm with provable sample complexity analysis to maximize the mean-variance objective function. To this end, the objective function is reformulated based on the Legendre-Fenchel duality, and a novel stochastic block coordinate ascent algorithm is proposed with in-depth analysis. 
There are many interesting future directions on this research topic. Besides stochastic policy gradient, deterministic policy gradient \citep{dpg:silver2014} has shown great potential in large discrete action space. It is interesting to design a risk-sensitive deterministic policy gradient method. 
Secondly, other reformulations of the mean-variance objective function are also worth exploring, which will lead to new families of algorithms. 
Thirdly, distributional RL~\citep{distributional2016} is strongly related to risk-sensitive policy search, and it is interesting to investigate the connections between risk-sensitive policy gradient methods and distributional RL.
Last but not least, it is interesting to test the performance of the proposed algorithms together with other risk-sensitive RL algorithms on highly-complex 
 risk-sensitive tasks, such as autonomous driving problems and other challenging tasks.

\section*{Acknowledgments}

Bo Liu, Daoming Lyu, and Daesub Yoon were partially supported by a grant (18TLRP-B131486-02) from Transportation and Logistics R\&D Program funded by Ministry of Land, Infrastructure and Transport of Korean government. Yangyang Xu was partially supported by the NSF grant DMS-1719549.

\bibliographystyle{apalike}
\bibliography{ref}

\clearpage
\begin{center}
{\Large \textbf{Appendix}}
\end{center}

\appendix
\section{Theoretical Analysis of Algorithm~\ref{alg:mvp}}
\label{sec:theory_app}
Now we present the theoretical analysis of Algorithm~\ref{alg:mvp}, with both asymptotic convergence and finite-sample error bound analysis.
For the purpose of clarity, in the following analysis, $x$ is defined as $x := [y,\theta]^\top$, where  $x^1 :=y,\ x^2 :=\theta$. Similarly, $g_t^i$ (resp. $\beta^i_t$) is used to denote $g_t^y$(resp. $\beta^y_t$) and $g_t^\theta$(resp. $\beta^\theta_t$), where  $g_t^1 :=g_t^y,\ g_t^2 :=g_t^\theta$(resp. $\beta^1_t :=\beta^y_t,\ \beta^2_t :=\beta^\theta_t$). Let $\beta_{t}^{\max} = \max\{\beta_{t}^1,\beta_{t}^2\}$, $\beta_{t}^{\min} = \min\{\beta_{t}^1,\beta_{t}^2\}$, and $\|\cdot\|_2$ denotes the Euclidean norm.
\subsection{Asymptotic Convergence Proof Algorithm~\ref{alg:mvp}}
\label{sec:asym}

In this section, we provide the asymptotic convergence analysis for Algorithm~\ref{alg:mvp} with Option I, i.e., the stepsizes are chosen to satisfy the Robbins-Monro condition and the output is the last iteration's result. We first introduce a useful lemma, which follows from Lemma \ref{lem:err_bound}.

\begin{lemma}
\label{lem:prod}
Under Assumption \ref{asm3}, we have
\begin{align}
\mathbb{E}[\langle \nabla_{y}\hat{f}_\lambda (x_t), \Delta_t^y \rangle] = & 0\\
\mathbb{E}[\langle \nabla_{\theta}\hat{f}_\lambda (x_t), \Delta_t^\theta \rangle] \leq & \beta^{\max}_t A G,
\end{align}
where $A$ is defined in Eq~\eqref{eq:stocerrbd}.
\end{lemma}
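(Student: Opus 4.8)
The plan is to condition on the sample history $\mathbf{\Xi}_{[t-1]}$ through iteration $t-1$ and apply the tower property, reducing each inner product to a quantity that is either exactly zero or controlled by Lemma~\ref{lem:err_bound}. The enabling observation is that $x_t = (\theta_t, y_t)$ is a deterministic function of $\mathbf{\Xi}_{[t-1]}$, so both partial gradients $\nabla_{y}\hat f_\lambda(x_t)$ and $\nabla_{\theta}\hat f_\lambda(x_t)$ are $\mathbf{\Xi}_{[t-1]}$-measurable and can be pulled outside the inner conditional expectation. I would state this measurability cleanly at the outset, since both identities rest on it.

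For the first identity I would exploit that the $y$-block is updated first, so its stochastic gradient is conditionally unbiased. Concretely, $\Delta_t^y = \tilde g_t^y - g_t^y = 2\bigl(R_t - J(\theta_t)\bigr)$, and because $R_t$ is the return of a fresh episode generated by $\pi_{\theta_t}$ with $\theta_t$ fixed given the history, $\mathbb{E}[R_t \mid \mathbf{\Xi}_{[t-1]}] = J(\theta_t)$, so $\mathbb{E}[\Delta_t^y \mid \mathbf{\Xi}_{[t-1]}] = 0$. The tower property then gives
\[
\mathbb{E}[\langle \nabla_{y}\hat f_\lambda(x_t), \Delta_t^y \rangle] = \mathbb{E}\bigl[\langle \nabla_{y}\hat f_\lambda(x_t), \mathbb{E}[\Delta_t^y \mid \mathbf{\Xi}_{[t-1]}] \rangle\bigr] = 0 .
\]

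For the second, the key difference is that the $\theta$-estimate $\tilde g_t^\theta$ uses $y_{t+1}$, which is itself a function of the same episode return $R_t$ through the $y$-update; this coupling destroys conditional unbiasedness, so $\mathbb{E}[\Delta_t^\theta \mid \mathbf{\Xi}_{[t-1]}]$ is nonzero but is bounded in norm by $A\beta_t^{\max}$ via Lemma~\ref{lem:err_bound}. Applying the tower property, then Cauchy--Schwarz inside the expectation, and finally the block-gradient bound $\|\nabla_{\theta}\hat f_\lambda(x_t)\|_2 \le G$ from Assumption~\ref{asm3}, I obtain
\[
\mathbb{E}[\langle \nabla_{\theta}\hat f_\lambda(x_t), \Delta_t^\theta \rangle] = \mathbb{E}\bigl[\langle \nabla_{\theta}\hat f_\lambda(x_t), \mathbb{E}[\Delta_t^\theta \mid \mathbf{\Xi}_{[t-1]}] \rangle\bigr] \le \mathbb{E}\bigl[\|\nabla_{\theta}\hat f_\lambda(x_t)\|_2\,\|\mathbb{E}[\Delta_t^\theta \mid \mathbf{\Xi}_{[t-1]}]\|_2\bigr] \le G A \beta_t^{\max} .
\]

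The main point here is conceptual rather than computational: isolating why the cyclic update order makes the first ($y$) block conditionally unbiased while coupling the second ($\theta$) block's gradient estimate to the fresh sample through $y_{t+1}$, which is exactly the origin of the $\mathcal{O}(\beta_t^{\max})$ bias quantified by Lemma~\ref{lem:err_bound}. The only external inputs are that lemma and the bounded-gradient part of Assumption~\ref{asm3}; everything else is a routine application of the tower property and Cauchy--Schwarz.
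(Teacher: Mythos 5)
Your proof is correct and follows essentially the same route as the paper's: condition on the history $\mathbf{\Xi}_{[t-1]}$, use that $x_t$ (hence $\nabla\hat f_\lambda(x_t)$) is determined by the history to pull the gradient out of the conditional expectation, get the first identity from the conditional unbiasedness $\mathbb{E}[\Delta_t^y\mid\mathbf{\Xi}_{[t-1]}]=0$, and get the second from Cauchy--Schwarz together with the bound $\|\mathbb{E}[\Delta_t^\theta\mid\mathbf{\Xi}_{[t-1]}]\|_2\le A\beta_t^{\max}$ of Lemma~\ref{lem:err_bound} and the gradient bound $G$ of Assumption~\ref{asm3}. Your explicit identification of the bias source --- $y_{t+1}$ being coupled to the same episode's return $R_t$ --- is a nice articulation of what the paper leaves implicit, but it is the same argument.
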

\begin{proof}
Let $\Xi_t$ denote the history of random samples from the first to the $t$--th episode, then $x_t$ is independent of $\Delta_t^i$ conditioned on $\Xi_{t - 1}$ (since $x_t$ is deterministic conditioned on $\Xi_{t - 1}$). Then we have
\begin{align}
&\mathbb{E}[\langle \nabla_{y}\hat{f}_\lambda (x_t), \Delta_t^y \rangle] = \mathbb{E}_{\Xi_{t - 1}}[\mathbb{E}[\langle \nabla_{y}\hat{f}_\lambda (x_t), \Delta_t^y \rangle|\Xi_{t - 1}]]  \\
= & \mathbb{E}_{\Xi_{t - 1}}[\langle \mathbb{E}[\nabla_{y}\hat{f}_\lambda (x_t)|\Xi_{t - 1}], \mathbb{E}[\Delta_t^y|\Xi_{t - 1}] \rangle] = 0,
\label{eq:expbound1}
\end{align}
\begin{align}
&\mathbb{E}[\langle \nabla_{\theta}\hat{f}_\lambda (x_t), \Delta_t^\theta \rangle] = \mathbb{E}_{\Xi_{t - 1}}[\mathbb{E}[\langle \nabla_{\theta}\hat{f}_\lambda (x_t), \Delta_t^\theta \rangle|\Xi_{t - 1}]]  \\
= & \mathbb{E}_{\Xi_{t - 1}}[\langle \mathbb{E}[\nabla_{\theta}\hat{f}_\lambda (x_t)|\Xi_{t - 1}], \mathbb{E}[\Delta_t^\theta|\Xi_{t - 1}] \rangle]   \\
\leq & \mathbb{E}_{\Xi_{t - 1}}[\| \mathbb{E}[\nabla_{\theta}\hat{f}_\lambda (x_t)|\Xi_{t - 1}]\|_2 \cdot \|\mathbb{E}[\Delta_t^\theta|\Xi_{t - 1}] \|_2]  \\
\leq & \beta^{\max}_t A \mathbb{E}_{\Xi_{t - 1}}[\| \mathbb{E}[\nabla_{\theta}\hat{f}_\lambda (x_t)|\Xi_{t - 1}]\|_2]   \\
\leq & \beta^{\max}_t A \mathbb{E}[\|\nabla_{\theta}\hat{f}_\lambda (x_t)\|_2] \leq \beta^{\max}_t A G,
\label{eq:expbound2}
\end{align}
where the second equality in both Eq.\eqref{eq:expbound1} and Eq.\eqref{eq:expbound2} follows from the conditional independence between $x_t$ and $\Delta_t^i$, the second inequality in Eq.\eqref{eq:expbound2} follows from Assumption \ref{asm:var}, and the last inequality in Eq.\eqref{eq:expbound2} follows from the Jensen's inequality and $G$ is the gradient bound.
\end{proof}

Then, we introduce Lemma~\ref{lem:1}, which is essential for proving Theorem~\ref{thm:asyn}.
\begin{lemma}
\label{lem:1}
For two non-negative scalar sequences $\{a_t\}$ and $\{b_t\}$, if $\sum_{t = 1}^{\infty}a_t = +\infty$ and $\sum_{t = 1}^{\infty}a_t b_t < +\infty$, we then have
\begin{equation} 
\mathop {\lim }_{t \to \infty } \inf {b_t} = 0.
\end{equation}
Further, if there exists a constant $K>0$ such that $|b_{t + 1} - b_{t}| \leq a_t K$, then
\begin{equation}
\lim_{t \rightarrow \infty}b_t = 0.
\end{equation}
\end{lemma}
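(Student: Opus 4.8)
The plan is to prove both assertions by contradiction, in each case playing the divergence $\sum_{t=1}^\infty a_t = \infty$ against the finiteness $\sum_{t=1}^\infty a_t b_t < \infty$. The first claim is the easy one and I would dispatch it directly; the second requires a careful crossing argument, and that is where the only real subtlety lies.

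For $\liminf_{t\to\infty} b_t = 0$: since $b_t \ge 0$ we have $\liminf_{t\to\infty} b_t \ge 0$, so it suffices to rule out a strictly positive value. If $\liminf_{t\to\infty} b_t = c > 0$, then there exists $T$ with $b_t \ge c/2$ for all $t \ge T$, and hence $\sum_{t \ge T} a_t b_t \ge \tfrac{c}{2}\sum_{t \ge T} a_t = +\infty$, contradicting $\sum_{t} a_t b_t < \infty$. This settles the first part.

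For $\lim_{t\to\infty} b_t = 0$: given that $\liminf_{t\to\infty} b_t = 0$, it is enough to show $\limsup_{t\to\infty} b_t = 0$. Suppose instead $\limsup_{t\to\infty} b_t = d > 0$ and put $\varepsilon = d/2$. Then $b_t > \varepsilon$ for infinitely many $t$ and $b_t < \varepsilon/2$ for infinitely many $t$, so I can inductively extract infinitely many disjoint \emph{descending} intervals $[u_k, v_k)$: choose $u_k$ with $b_{u_k} > \varepsilon$, let $v_k$ be the first index after $u_k$ with $b_{v_k} < \varepsilon/2$, and then pick $u_{k+1} > v_k$. By the minimality of $v_k$, every index in $[u_k, v_k)$ satisfies $b_t \ge \varepsilon/2$. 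Telescoping the step bound $|b_{t+1}-b_t| \le K a_t$ across each interval gives
\[
\frac{\varepsilon}{2} < b_{u_k} - b_{v_k} \le K \sum_{t=u_k}^{v_k-1} a_t, \qquad\text{so}\qquad \sum_{t=u_k}^{v_k-1} a_t \ge \frac{\varepsilon}{2K}.
\]
Since $b_t \ge \varepsilon/2$ on exactly these indices, $\sum_{t=u_k}^{v_k-1} a_t b_t \ge \tfrac{\varepsilon}{2}\cdot\tfrac{\varepsilon}{2K} = \tfrac{\varepsilon^2}{4K}$, and summing over the disjoint intervals yields $\sum_t a_t b_t = +\infty$, the desired contradiction. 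Therefore $\limsup_{t\to\infty} b_t = 0$ and, combined with $b_t \ge 0$, we conclude $\lim_{t\to\infty} b_t = 0$.

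The main obstacle, and the point I would emphasize, is aligning the index set over which $a_t$ is lower-bounded with the set where $b_t$ is bounded below. The natural instinct is to use \emph{ascending} crossings (from small $b$ up to large $b$), but there the lower bound on $b_t$ fails at the starting index, leaving a stray term $a_{u_k}$ that one can only absorb under the extra hypothesis $a_t \to 0$. Switching to \emph{descending} crossings removes this difficulty entirely: the bound $b_t \ge \varepsilon/2$ holds on the \emph{whole} summation range $[u_k, v_k)$, so the telescoped estimate on $\sum a_t$ transfers to $\sum a_t b_t$ with no shift and no additional assumption beyond those stated.
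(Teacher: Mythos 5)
Your proof is correct. Note that the paper does not actually spell out a proof of this lemma: it defers entirely to Lemma A.5 of Mairal (2013) and Proposition 1.2.4 of Bertsekas, and the argument you give --- contradiction via $\liminf$ for the first claim, then disjoint descending crossing intervals $[u_k, v_k)$ on which $b_t \ge \varepsilon/2$ throughout, telescoped against $|b_{t+1}-b_t|\le K a_t$ --- is precisely the standard argument underlying those citations, so you have in effect supplied the self-contained proof the paper omits. Your closing observation is also apt: orienting the crossings downward keeps the lower bound on $b_t$ valid over the entire summation range, which is exactly what lets the lemma hold without any extra hypothesis such as $a_t \to 0$ (the only cosmetic caveat is that when $\limsup_t b_t = +\infty$ you should pick an arbitrary finite $\varepsilon>0$ rather than $\varepsilon = d/2$, after which the argument runs unchanged).
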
 
The detailed proof can be found in Lemma A.5 of \citep{mairal2013stochastic} and Proposition 1.2.4 of \citep{bertsekas:npbook}. Now it is ready to prove Theorem~\ref{thm:asyn}.

\begin{proof}[Proof of Theorem \ref{thm:asyn}]

We define $\Gamma_t^1 := \hat{f}(x_{t}^{1},x_t^{2}) - \hat{f}(x_{t + 1}^{1},x_t^{2}), \Gamma_t^2 := \hat{f}(x_{t + 1}^{1},x_t^{2}) - \hat{f}(x_{t + 1}^{1},x_{t + 1}^{2})$ to denote the block update. It turns out that for $i=1,2$, and $\Gamma_t^i$ can be bounded following the Lipschitz smoothness as
\begin{align}
\Gamma_t^i
\leq & \langle g_t^i, x_t^i - x_{t+1}^i \rangle + \dfrac{L}{2}\|x_{t}^i - x_{t + 1}^i \|_2^2   \\
= & -\beta_t^i \langle g_t^i, \Tilde{g}_t^i \rangle + \dfrac{L}{2}(\beta_t^i)^2\|\Tilde{g}_t^i\|_2^2  \\
= & -(\beta_t^i - \dfrac{L}{2}(\beta_t^i)^2)\|g_t^i\|_2^2 + \dfrac{L}{2}(\beta_t^i)^2)\|\Delta_t^i\|_2^2 - (\beta_t^i - L(\beta_t^i)^2)\langle g_t^i,\Delta_t^i \rangle   \\
= & -(\beta_t^i - \dfrac{L}{2}(\beta_t^i)^2)\|g_t^i\|_2^2 + \dfrac{L}{2}(\beta_t^i)^2\|\Delta_t^i\|_2^2 \\
& - (\beta_t^i - L(\beta_t^i)^2) (\langle g_t^i - \nabla_{x^i}\hat{f}_\lambda (x_t), \Delta_t^i \rangle +  \langle \nabla_{x^i}\hat{f}_\lambda (x_t), \Delta_t^i \rangle)
\label{eq:lipthmq0}
\end{align}
where the equalities follow the definition of $\Delta_t^i$ and the update law of Algorithm~\ref{alg:mvp}.
We also have the following argument
\begin{align}
& - (\beta_t^i - L(\beta_t^i)^2)\langle g_t^i - \nabla_{x^i}\hat{f}_\lambda (x_t), \Delta_t^i \rangle  \\
\leq & |\beta_t^i - L(\beta_t^i)^2| \|\Delta_t^i\|_2 \|g_t^i - \nabla_{x^i}\hat{f}_\lambda (x_t)\|_2  \\
\leq & L |\beta_t^i - L(\beta_t^i)^2| \|\Delta_t^i\|_2 \|x_{t + 1}  - x_{t}\|_2  \\
\leq & L |\beta_t^i - L(\beta_t^i)^2| \|\Delta_t^i\|_2 \sqrt{\sum_{j = 1}^2 \|\beta_t^j \tilde{g}_t^j\|_2^2}  \\
\leq & L (\beta_t^i + L(\beta_t^i)^2) \beta_t^{\max} \left(\|\Delta_t^i\|_2 + \sum_{j = 1}^2 (\|g_t^j\|_2^2 + \|\Delta_t^j\|_2^2) \right),
\label{eq:lipargu}
\end{align}
where the first inequality follows from Cauchy-Schwarz inequality, the second inequality follows from the Lipschitz smoothness of objective function $\hat{f}_\lambda$, the third inequality follows from the update law of Algorithm~\ref{alg:mvp}, and the last inequality follows from the triangle inequality.
Combining Eq.~\eqref{eq:lipthmq0} and Eq.~\eqref{eq:lipargu}, we obtain
\begin{align}
\Gamma_t^i
\leq & -(\beta_t^i - \dfrac{L}{2}(\beta_t^i)^2)\|g_t^i\|_2^2 + \dfrac{L}{2}(\beta_t^i)^2\|\Delta_t^i\|_2^2   \\
& - (\beta_t^i - L(\beta_t^i)^2)\langle \nabla_{x^i}\hat{f}_\lambda (x_t), \Delta_t^i \rangle   \\
& + L(\beta_t^i + L(\beta_t^i)^2) \cdot \beta_t^{\max} \left(\|\Delta_t^i\|_2^2 + \sum_{j = 1}^2(\|g_t^j\|_2^2 + \|\Delta_t^j\|_2^2)\right).
\label{eq:lipthmq}
\end{align}

Summing Eq.~\eqref{eq:lipthmq} over $i$, then we obtain
\begin{align}
\label{eq:summing}
&\hat{f}_\lambda(x_t) - \hat{f}_\lambda(x_{t + 1})   \\
\leq & -\sum_{i = 1}^2(\beta_t^i - \dfrac{L}{2}(\beta_t^i)^2)\|g_t^i\|_2^2 - \sum_{i = 1}^2(\beta_t^i - L(\beta_t^i)^2)\langle \nabla_{x^i}\hat{f}_\lambda (x_t), \Delta_t^i \rangle  \\
&+ \sum_{i = 1}^2 \left( \dfrac{L}{2}(\beta_t^i)^2\|\Delta_t^i\|_2^2  + L(\beta_t^i + L(\beta_t^i)^2)\beta_t^{\max}(\|\Delta_t^i\|_2^2 + \sum_{j = 1}^2(\|g_t^j\|_2^2 + \|\Delta_t^j\|_2^2)) \right).
\end{align}

We also have the following fact, 
\begin{align}
\mathbb{E}[\langle \nabla_{y}\hat{f}_\lambda (x_t), \Delta_t^y \rangle] = & 0\\
\mathbb{E}[\langle \nabla_{\theta}\hat{f}_\lambda (x_t), \Delta_t^\theta \rangle] \leq & \beta^{\max}_t A G.
\end{align}
We prove this fact in Lemma \ref{lem:prod} as a special case of Lemma \ref{lem:err_bound}, and the general analysis can be found in Lemma 1 in \citep{cd:bsg:xu2015}.
Taking expectation w.r.t. $t$ on both sides of the inequality Eq.~\eqref{eq:summing}, we have
\begin{align}
& \mathbb{E}[\hat{f}_\lambda(x_t)] - \mathbb{E}[\hat{f}_\lambda(x_{t + 1})]  \\
\leq & -\sum_{i = 1}^2(\beta_t^i - \dfrac{L}{2}(\beta_t^i)^2)\mathbb{E}[\|g_t^i\|_2^2] + (\beta_t^\theta - L(\beta_t^\theta)^2)\beta^{\max}_t A G   \\
& + \sum_{i = 1}^2\left( \dfrac{L}{2}(\beta_t^i)^2\mathbb{E}[\|\Delta_t^i\|_2^2]+ L(\beta_t^i + L(\beta_t^i)^2)\beta_t^{\max} (\mathbb{E}[\|\Delta_t^i\|_2^2] + \sum_{j = 1}^2(\mathbb{E}[\|g_t^j\|_2^2] + \mathbb{E}[\|\Delta_t^j\|_2^2])) \right)  \\
\leq & -\sum_{i = 1}^2(\beta_t^{\min} - \dfrac{L}{2}(\beta_t^{\max})^2)\mathbb{E}[\|g_t^i\|_2^2] + (\beta^{\max}_t)^2 A G  \\
& + ( L(\beta_t^{\max})^2\sigma^2 + 2L\beta_t^{\max}(\beta_t^{\max} + L(\beta_t^{\max})^2)(3 \sigma^2 + 2 G^2)),
\label{eq:exp}
\end{align}
where the first inequality follows from Eq.~\eqref{eq:expbound1} and Eq.~\eqref{eq:expbound2}, and the second inequality follows from the boundedness of $\mathbb{E}[\|\Delta_t^i\|_2]$ and $\mathbb{E}[\|g_t^i\|_2]$.

Rearranging Eq.~\eqref{eq:exp}, we obtain
\begin{align}
&\sum_{i = 1}^2(\beta_t^{\min} - \dfrac{L}{2}(\beta_t^{\max})^2)\mathbb{E}[\|g_t^i\|_2^2]  \\
\leq&  \mathbb{E}[\hat{f}_\lambda(x_{t + 1})] - \mathbb{E}[\hat{f}_\lambda(x_t)] + (\beta_t^{\max})^2 A M_\rho + L(\beta_t^{\max})^2\sigma^2  \\
& + 2L\beta_t^{\max}(\beta_t^{\max} + L(\beta_t^{\max})^2)(3 \sigma^2 + 2 G^2).
\label{eq:rearr}
\end{align}

By further assuming $0 < \inf_{t}\frac{\{\beta^{\theta}_t\}}{\{\beta^{y}_t\}} \leq \sup_{t}\frac{\{\beta^{\theta}_t\}}{\{\beta^{y}_t\}} < \infty$, it can be verified that $\{\beta^{\max}_t\}$ and $\{\beta^{\min}_t\}$ also satisfy Robbins-Monro condition. Note that $\hat{f}_{\lambda}$ is upper bounded, summing Eq.~\eqref{eq:rearr} over $t$ and using the Robbins-Monro condition of $\{\beta^{\theta}_t\}, \{\beta^{y}_t\}, \{\beta^{\max}_t\}, \{\beta^{\min}_t\}$,
we have
\begin{equation}
\label{eq:lemmf}
\sum_{t = 1}^{\infty}\beta_t^{\min}\mathbb{E}[\|g_t^i\|_2^2] < \infty, ~\forall i.
\end{equation}

Furthermore, let $\xi^1_t = (x_t^1, x_t^2)$ and $\xi^2_t = (x_{t + 1}^1, x_t^2)$, then
\begin{align}
|\mathbb{E}[\|g_{t + 1}^i\|_2^2] - \mathbb{E}[\|g_t^i\|_2^2]| \leq & \mathbb{E}[\|g_{t + 1}^i + g_{t}^i\|_2 \cdot \|g_{t + 1}^i - g_{t}^i\|_2]\\  
\leq & 2LM_{\rho}\mathbb{E}[\|\xi^i_{t + 1} - \xi^i_t\|_2] \\  
= & 2LM_{\rho}\mathbb{E}\left[\sqrt{\sum_{j < i}\|\beta_{t + 1}^j\Tilde{g}_{t + 1}^j\|_2^2 + \sum_{j \geq i}\|\beta_{t}^j\Tilde{g}_{t}^j\|_2^2}\right] \\  
\leq & 2LM_{\rho}\beta_{t}^{\max}\mathbb{E}\left[\sqrt{\sum_{j < i}\|\Tilde{g}_{t + 1}^j\|_2^2 + \sum_{j \geq i}\|\Tilde{g}_{t}^j\|_2^2} \right] \\  
\leq & 2LM_{\rho}\beta_{t}^{\max}\sqrt{\mathbb{E}[\sum_{j < i}\|\Tilde{g}_{t + 1}^j\|_2^2 + \sum_{j \geq i}\|\Tilde{g}_{t}^j\|_2^2]}\\ 
\leq & 2LM_{\rho}\beta_{t}^{\max}\sqrt{2(G^2 + \sigma^2)},
\label{eq:eq48}
\end{align}
where the first inequality follows from Jensen's inequality, the second inequality follows from the definition of gradient bound $G$ and the gradient Lipschitz continuity of $\hat{f}_{\lambda}$, the third inequality follows from the Robbins-Monro condition of $\{\beta_t^y\}$ and $\{\beta_t^\theta\}$, and the last two inequalities follow Jensen's inequality in probability theory.

Combining Eq.~\eqref{eq:lemmf} and Eq.~\eqref{eq:eq48} and according to Lemma \ref{lem:1}, we have $\lim_{t\rightarrow \infty}\mathbb{E}[\|g_t^i\|_2] = 0$ for $i = 1,2$ by Jensen's inequality. Hence, 
%
%
\begin{align}
\mathbb{E}[\|\nabla_{y}\hat{f}_{\lambda}(x_t)\|_2] =  & \mathbb{E}[\|g_t^y\|_2] \\
\mathbb{E}[\|\nabla_{\theta}\hat{f}_{\lambda}(x_t)\|_2] \leq & \mathbb{E}[\|\nabla_{\theta}\hat{f}_{\lambda}(x_t) - g_t^\theta\|_2] + \mathbb{E}[\|g_t^\theta\|_2]   \\
\leq & L \cdot \mathbb{E}[\|y_{t + 1} - y_t\|_2] + \mathbb{E}[\|g_t^\theta\|_2]   \\ 
\leq & L \beta_t^{\max} \mathbb{E}[\|\tilde{g}_{t}^{y}\|_2] + \mathbb{E}[\|g_t^\theta\|_2]   \\ 
\leq & L\beta_t^{\max}\sqrt{G^2 + \sigma^2} + \mathbb{E}[\|g_t^\theta\|_2]
\end{align}
where the first inequality follows from the triangle inequality, the second inequality follows from the Lipschitz continuity of $\hat{f}_{\lambda}$, and the last inequality follows from the same argument for Eq.~\eqref{eq:eq48}.
Also, note that $\lim_{t\rightarrow \infty}\beta_t^{\max} = 0$, $\lim_{t\rightarrow \infty}L\sqrt{G^2 + \sigma^2} < \infty$, and $\lim_{t\rightarrow \infty}\mathbb{E}[\|g_t^i\|_2] = 0$, so that when $t \to \infty $, $L\beta_t^{\max}\sqrt{G^2 + \sigma^2} + \mathbb{E}[\|g_t^y\|_2] + \mathbb{E}[\|g_t^\theta\|_2]\to 0$. This completes the proof.
\end{proof}
\begin{remark}
Different from the stringent two-time-scale setting where one stepsize needs to be ``quasi-stationary'' compared to the other~\citep{saferl:castro2012}, the stepsizes in Algorithm~\ref{alg:mvp} does not have such requirements, which makes it easy to tune in practice.
\end{remark}

\subsection{Finite-Sample Analysis of Algorithm~\ref{alg:mvp}}
\label{sec:fs-mvp-appendix}
The above analysis provides asymptotic convergence guarantee of Algorithm~\ref{alg:mvp}, however, it is desirable to know the sample complexity of the algorithm in real applications. Motivated by offering RL practitioners confidence in applying the algorithm, we then present the sample complexity analysis with Option II described in Algorithm~\ref{alg:mvp}, i.e., the stepsizes are set to be a constant, and the output is randomly selected from $\{x_1,\cdots,x_N\}$ with a discrete uniform distribution.
This is a standard strategy for nonconvex stochastic optimization approaches \citep{cd:sbmd:dang2015}.
With these algorithmic refinements, we are ready to present the finite-sample analysis as follows. It should be noted that this proof is a special case of the general stochastic nonconvex BSG algorithm analysis provided in Appendix~\ref{sec:proof:thm:cyclicbcd}.

\begin{proof}[Proof of Theorem \ref{thm:cyclicbcd2}]

The proof of finite-sample analysis starts from the similar idea with asymptotic convergence. The following analysis follows from Eq.~\eqref{eq:rearr} in the proof of Theorem \ref{thm:asyn}, but we are using stepsizes $\{\beta_t^{\theta}\}$, $\{\beta_t^{y}\}$ are constants which satisfy $2\beta_t^{\min} > L(\beta_t^{\max})^2$ for $t = 1,\cdots,N$ in this proof.

Summing Eq.~\eqref{eq:rearr} over $t$, we have
\begin{align}
\label{eq:finalsum}
&\sum_{t = 1}^N \sum_{i = 1}^2(\beta_t^{\min} - \dfrac{L}{2}(\beta_t^{\max})^2)\mathbb{E}[\|g_t^i\|_2^2]  \\
\leq&  \hat{f}_\lambda^* - \hat{f}_\lambda(x_1) +\sum_{t = 1}^N [ (\beta_t^{\max})^2 A G + L(\beta_t^{\max})^2\sigma^2 + 2L\beta_t^{\max}(\beta_t^{\max} + L(\beta_t^{\max})^2)(3 \sigma^2 + 2 G^2)].
\end{align}


Next, we bound $\mathbb{E}[\|\nabla\hat{f}_\lambda(x_t) \|_2^2]$ using $\mathbb{E}[\|g_t^\theta\|_2^2 + \|g_t^y\|_2^2]$
\begin{align}
\mathbb{E}[\|\nabla\hat{f}_\lambda(x_t) \|_2^2] = & \mathbb{E}[\|\nabla_\theta\hat{f}_\lambda(x_t) \|_2^2 + \|\nabla_y\hat{f}_\lambda(x_t) \|_2^2]  \\
\leq &\mathbb{E}[\|\nabla_\theta\hat{f}_\lambda(\theta_t, y_{t}) - \nabla_\theta\hat{f}_\lambda(\theta_t, y_{t + 1}) + \nabla_\theta\hat{f}_\lambda(\theta_t, y_{t + 1})\|_2^2 + \|g_t^y\|_2^2]  \\
\leq &\mathbb{E}[\|\nabla_\theta\hat{f}_\lambda(\theta_t, y_{t}) - g_t^\theta\|_2^2 + \|g_t^\theta\|_2^2  + 2\langle\nabla_\theta\hat{f}_\lambda(\theta_t, y_{t}) - g_t^\theta,  g_t^\theta\rangle + \|g_t^y\|_2^2]  \\
\leq &\mathbb{E}[\|\nabla_\theta\hat{f}_\lambda(\theta_t, y_{t}) - g_t^\theta\|_2^2 + 2\langle\nabla_\theta\hat{f}_\lambda(\theta_t, y_{t}) - g_t^\theta,  g_t^\theta\rangle] + \mathbb{E}[\|g_t^\theta\|_2^2 + \|g_t^y\|_2^2]  \\
\leq & \mathbb{E}[L^2 \|y_{t + 1} - y_{t}\|_2^2 + 2L\|y_{t + 1} - y_{t}\|_2\cdot\|g_t^\theta\|_2] + \mathbb{E}[\|g_t^\theta\|_2^2 + \|g_t^y\|_2^2]  \\
\leq & \mathbb{E}[L^2 (\beta_{t}^y)^2\|\tilde{g}_t^y\|_2^2 + 2L\beta_{t}^y\|\tilde{g}_t^y\|_2\cdot\|g_t^\theta\|_2] + \mathbb{E}[\|g_t^\theta\|_2^2 + \|g_t^y\|_2^2]  \\
\leq & L^2 (\beta_{t}^y)^2(G^2 + \sigma^2) + 2L\beta_{t}^y\mathbb{E}[\|\tilde{g}_t^y\|_2\cdot\|g_t^\theta\|_2] + \mathbb{E}[\|g_t^\theta\|_2^2 + \|g_t^y\|_2^2]  \\
\leq & L^2 (\beta_{t}^y)^2(G^2 + \sigma^2) + L\beta_{t}^y\mathbb{E}[\|\tilde{g}_t^y\|_2^2 + \|g_t^\theta\|_2^2] + \mathbb{E}[\|g_t^\theta\|_2^2 + \|g_t^y\|_2^2]  \\
\leq & L^2 (\beta_{t}^{\max})^2(G^2 + \sigma^2) + L\beta_{t}^{\max}(2G^2 + \sigma^2) + \mathbb{E}[\|g_t^\theta\|_2^2 + \|g_t^y\|_2^2].
\label{eq:newbd}
\end{align}

Then, combine Eq.~\eqref{eq:newbd} with Eq.~\eqref{eq:finalsum}
\begin{align}
& \sum_{t = 1}^N (\beta_t^{\min} - \dfrac{L}{2}(\beta_t^{\max})^2) \mathbb{E}[\|\nabla\hat{f}_\lambda(x_t) \|_2^2]  \\
\leq & \sum_{t = 1}^N (\beta_t^{\min} - \dfrac{L}{2}(\beta_t^{\max})^2) \mathbb{E}[L^2 (\beta_{t}^{\max})^2(G^2 + \sigma^2) + L\beta_{t}^{\max}(2G^2 + \sigma^2) + \|g_t^\theta\|_2^2 + \|g_t^y\|_2^2]  \\
\leq & \hat{f}_\lambda^* - \hat{f}_\lambda(x_1) +\sum_{t = 1}^N [(\beta_t^{\min} - \dfrac{L}{2}(\beta_t^{\max})^2)(L^2 (\beta_{t}^{\max})^2(G^2 + \sigma^2)  \\
& + L\beta_{t}^{\max}(2G^2 + \sigma^2)) + (\beta_t^{\max})^2 A G + L(\beta_t^{\max})^2\sigma^2  + 2L\beta_t^{\max}(\beta_t^{\max} + L(\beta_t^{\max})^2)(3 \sigma^2 + 2 G^2)]  \\
\leq&  \hat{f}_\lambda^* - \hat{f}_\lambda(x_1) + (\beta_t^{\max})^2\sum_{t = 1}^N [(1 - \dfrac{L}{2}\beta_t^{\max})(L^2 \beta_{t}^{\max}(G^2 + \sigma^2) + L(2M_{\rho}^2 + \sigma^2)) + A G + L\sigma^2  \\
& + 2L(1 + L\beta_t^{\max})(3 \sigma^2 + 2 G^2)].
\end{align}

Rearrange it, we obtain
\begin{align}
\mathbb{E}[\|\nabla\hat{f}_\lambda(x_z) \|_2^2] \leq \dfrac{\hat{f}_\lambda^* - \hat{f}_\lambda(x_1) + N (\beta_t^{\max})^2C}{N (\beta_t^{\min} - \frac{L}{2}(\beta_t^{\max})^2)},
\end{align}
where
\begin{align}
C = & (1 - \dfrac{L}{2}\beta_t^{\max})(L^2 \beta_{t}^{\max}(G^2 + \sigma^2) + L(2G^2 + \sigma^2)) + A G + L\sigma^2 + 2L(1 + L\beta_t^{\max})(3 \sigma^2 + 2 G^2).  
\end{align}

\end{proof}


\begin{remark}
In Theorem \ref{thm:cyclicbcd2}, we have proven the finite-sample analysis of Algorithm~\ref{alg:mvp} with Option II, i.e., constant stepsizes and randomly picked solution.
Note that the error bound in Eq.~\eqref{eq:thm2} can be simplified as $\mathcal O(1 / (N \beta_t^{\min})) + \mathcal O(\beta_t^{\max})$.
Especially, if $\beta_t^{\max}=\beta_t^{\min}=\beta_t^\theta = \beta_t^y$ are set to be $\Theta(1 / \sqrt{N})$, then the convergence rate of Option II in Algorithm~\ref{alg:mvp} is $\mathcal O(1/\sqrt{N})$. 
\end{remark}
%

\section{Proofs in Convergence Analysis of Nonconvex BSG}
\label{sec:probsg}

This section includes proof of Lemma \ref{lem:err_bound} and Algorithm \ref{thm:cyclicbcd}.
We first provide the pseudo-code for nonconvex BSG method as Algorithm \ref{alg:c_sbcd}.

\begin{algorithm}[thb]
\caption{The nonconvex BSG Algorithm}
\label{alg:c_sbcd}
{\bfseries Input:} Initial point $x_1 \in \mathbb{R}^n$, stepsizes $\{ \beta_t^i: i = 1,\cdots,b \}_{k = 1}^{\infty}$, positive integers $\{m_t\}_{k = 1}^\infty$ that indicate the mini-batch sizes, and iteration limit $N$.
\begin{algorithmic}[1]
\FOR{$k = 1,2,\dotsc,N$}
    \STATE Sample mini batch $\Xi_t = \{\xi_{t,1},\xi_{t,2},\dotsc,\xi_{t,m_t}\}$.
    \STATE Specify update order $\pi_t^i = i$, $i = 1,\cdots,b$, or randomly shuffle $\{i = 1,\cdots,b\}$ to $\{\pi_t^{1},\pi_t^{2},\dotsc,\pi_t^{b}\}$.
    \FOR{$i = 1,2,\dotsc,b$}
        \STATE Compute the stochastic partial gradient for the $\pi_t^i$th block as
            \begin{align}
                \tilde g_t^i = \frac{1}{m_t} \sum_{j = 1}^{m_t} \nabla_{x^{\pi_t^i}}F(x_{t + 1}^{\pi_t^{< i}}, x_{t}^{\pi_t^{\geq i}}; \xi_{t,j}).
            \end{align}
        \STATE Update $\pi_t^i$th block
            \begin{align}
                x_{t + 1}^{\pi_t^i} = x_{k}^{\pi_t^i} - \beta_t^i \tilde g_t^i.
            \end{align}
    \ENDFOR
\ENDFOR
\STATE Return $\bar x_{N} = x_{z}$ randomly according to
    \begin{align}
        \Pr(z = t) = \frac{\beta_t^{\min} - \frac{ L }{2}(\beta_t^{\max})^2}{\sum_{t = 1}^N (\beta_t^{\min} - \frac{ L }{2}(\beta_t^{\max})^2)}, \ t = 1,\dotsc, N.
    \label{eq:randompick}
    \end{align}
\end{algorithmic}
\end{algorithm}

\subsection{Proof of Lemma \ref{lem:err_bound}}
\label{sec:prooflem:err_bound}
\begin{proof}[Proof of Lemma \ref{lem:err_bound}] 
We prove Lemma \ref{lem:err_bound} for the case of discrete $\xi$, note that the proof still holds for the case of continuous $\xi$ just by using probability density function to replace probability distribution.
Without the loss of generality, we assume a fixed update order in Algorithm \ref{alg:c_sbcd}:
$
\pi_{t}^{i} = i,
$ 
for all $i$ and $t$. Let $\Xi_t = \{\xi_{k,1},\xi_{k,2},\dotsc,\xi_{k,m_t}\}$ be any mini-batch samples in the $t$-th iteration. Let $\tilde g_{\Xi_{t},t}^i = \frac{1}{m_t} \sum_{j = 1}^{m_t} \nabla_{x^{i}}F(x_{t + 1}^{< i}, x_{t}^{\geq i}; \xi_{t,j})$ and $g_{\Xi_{t},t}^i = \nabla_{x^{i}}f(x_{t + 1}^{< i}, x_{t}^{\geq i})$, and $x_{\Xi_{t},t + 1}^{i} = x_{t}^{i} - \beta_t^i \tilde g_{\Xi_{t},t}^i$. Then we have
\begin{align}
\mathbb{E}[\tilde g_{\Xi_{t},t}^i | \mathbf \Xi_{[t - 1]}] = & \mathbb E_{\Xi_{t}} \left[\frac{1}{m_t} \sum_{j = 1}^{m_t} \nabla_{x^{i}}F(x_{\Xi_{t},t + 1}^{< i}, x_{t}^{\geq i}; \xi_{t,j}) \right]\\
= & \sum_{\xi_{1},\dotsc,\xi_{m_t}}\Pr(\Xi_t = \{\xi_{1},\xi_{2},\dotsc,\xi_{m_t}\}) \frac{1}{m_t} \sum_{j = 1}^{m_t} \nabla_{x^i}F(x_{\Xi_{t},t + 1}^{< i}, x_{t}^{\geq i}; \xi_{j}),
\label{eq:exp_tg}
\end{align}
and
\begin{align}
\mathbb{E}[g_{\Xi_{t},t}^i | \mathbf \Xi_{[t - 1]}] = & \mathbb E_{\Xi_{t}} \left[\nabla_{x^{i}}f(x_{\Xi_{t},t + 1}^{< i}, x_{t}^{\geq i}) \right]\\
=& \sum_{\xi'_{1},\dotsc,\xi'_{m_t}} \Pr(\Xi'_t = \{\xi'_{1},\xi'_{2},\dotsc,\xi'_{m_t}\}) \nabla_{x^{i}}f(x_{\Xi'_{t},t + 1}^{< i}, x_{t}^{\geq i})\\
=& \sum_{\xi'_{1},\dotsc,\xi'_{m_t}} \Pr(\Xi'_t = \{\xi'_{1},\xi'_{2},\dotsc,\xi'_{m_t}\}) \sum_{\xi_l}\Pr(\xi = \xi_l)  \nabla_{x^{i}}F(x_{\Xi'_{t},t + 1}^{< i}, x_{t}^{\geq i}; \xi_l)\\
=& \sum_{\xi'_{1},\dotsc,\xi'_{m_t}} \Pr(\Xi'_t = \{\xi'_{1},\xi'_{2},\dotsc,\xi'_{m_t}\}) \sum_{\xi_{1},\dotsc,\xi_{m_t}}  \Pr(\Xi_t = \{\xi_{1},\xi_{2},\dotsc,\xi_{m_t}\}) \\
& \frac{1}{m_t} \sum_{j = 1}^{m_t} \nabla_{x^{i}}F(x_{\Xi'_{t},t + 1}^{< i}, x_{t}^{\geq i}; \xi_j).
\label{eq:exp_g}
\end{align}


Combine Eq.~\eqref{eq:exp_tg} and Eq.~\eqref{eq:exp_g}, we can obtain the expectation of $\Delta_t^i$ as
\begin{align}
\mathbb{E}[\Delta_t^i | \mathbf \Xi_{[t - 1]}] = &\mathbb{E}[\tilde g_t^i - g_t^i | \mathbf \Xi_{[t - 1]}] \\
=& \sum_{\xi'_{1},\dotsc,\xi'_{m_t}} \Pr(\Xi'_t = \{\xi'_{1},\xi'_{2},\dotsc,\xi'_{m_t}\}) \sum_{\xi_{1},\dotsc,\xi_{m_t}}  \Pr(\Xi_t = \{\xi_{1},\xi_{2},\dotsc,\xi_{m_t}\}) \\
& \frac{1}{m_t} \sum_{j = 1}^{m_t} (\nabla_{x^{i}}F(x_{\Xi_{t},t + 1}^{< i}, x_{t}^{\geq i}; \xi_j) - \nabla_{x^{i}}F(x_{\Xi'_{t},t + 1}^{< i}, x_{t}^{\geq i}; \xi_j)) \\
=& \sum_{\xi'_{1},\dotsc,\xi'_{m_t}} \sum_{\xi_{1},\dotsc,\xi_{m_t}} \Pr(\Xi'_t = \{\xi'_{1},\xi'_{2},\dotsc,\xi'_{m_t}\}) \Pr(\Xi_t = \{\xi_{1},\xi_{2},\dotsc,\xi_{m_t}\}) \\
& \frac{1}{m_t} \sum_{j = 1}^{m_t} (\nabla_{x^{i}}F(x_{\Xi_{t},t + 1}^{< i}, x_{t}^{\geq i}; \xi_j) - \nabla_{x^{i}}F(x_{\Xi'_{t},t + 1}^{< i}, x_{t}^{\geq i}; \xi_j)).
\label{eq:exp_delta}
\end{align}

Note that, since the objection function is Lipschitz smoothness, $F(x; \xi)$ is also Lipschitz smoothness if $\Pr(\xi) > 0$, and we use $L$ to denote the maximum Lipschitz constant for all $\Pr(\xi) > 0$. Similarly, we can also obtain the gradient of $F(x; \xi)$ is also bounded using same analysis, and we use $G$ to denote the maximum bound for all $\Pr(\xi) > 0$. Using these two fact, we have
\begin{align}
& \|\nabla_{x^{i}}F(x_{\Xi_{t},t + 1}^{< i}, x_{t}^{\geq i}; \xi_j) - \nabla_{x^{i}}F(x_{\Xi'_{t},t + 1}^{< i}, x_{t}^{\geq i}; \xi_j)\|_2 \\
\leq & L \|x_{\Xi_{t},t + 1}^{< i} - x_{\Xi'_{t},t + 1}^{< i}\|_2\\
\leq &\sum_{l < i}  L \|x_{\Xi_{t},t + 1}^{l} - x_{\Xi'_{t},t + 1}^{l}\|_2\\
\leq & \sum_{l < i}  L\beta_t^l \|\tilde g_{\Xi_{t},t}^l - \tilde g_{\Xi'_{t},t}^l\|_2\\
\leq &  2 L b G \beta_t^{\max}.
\label{eq:exp_delta_bound}
\end{align}

Combine Eq.~\eqref{eq:exp_delta} and Eq.~\eqref{eq:exp_delta_bound}, we complete the proof as
\begin{align}
& \mathbb{E}[\Delta_t^i | \mathbf \Xi_{[t - 1]}] \\
\leq & \sum_{\xi'_{1},\dotsc,\xi'_{m_t}} \sum_{\xi_{1},\dotsc,\xi_{m_t}} \Pr(\Xi'_t = \{\xi'_{1},\xi'_{2},\dotsc,\xi'_{m_t}\}) \Pr(\Xi_t = \{\xi_{1},\xi_{2},\dotsc,\xi_{m_t}\}) 2 L b G \beta_t^{\max}\\
= & 2 L b G \beta_t^{\max},
\end{align}
where the last equation follows from the Law of total probability. This completes the proof.
\end{proof}


\subsection{Proof of Theorem \ref{thm:cyclicbcd}}
\label{sec:proof:thm:cyclicbcd}

Let $\beta_t^{\max} \coloneqq \max_i \beta_t^{i}$, $\beta_t^{\min} \coloneqq \min_i \beta_t^{i}$. To establish the convergence rate analysis, we start with Lemma~\ref{lem:vecdoterr}.
\begin{lemma}
\label{lem:vecdoterr}
Let $u_k$ be a random vector that only depends on $\mathbf \Xi_{[t - 1]}$. If $u_t$ is independent of $\Delta_t^i$, then
\begin{align}
\mathbb E [\langle u_t, \Delta_t^i \rangle] \leq A \beta_t^{\max} \mathbb E [\|u_t\|_2],
\end{align}
where $A$ is defined in Eq~\eqref{eq:stocerrbd}.
\end{lemma}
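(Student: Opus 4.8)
The plan is to reproduce the argument already used for the special case in Lemma~\ref{lem:prod}, now replacing the partial gradients $\nabla_{x^i}\hat{f}_\lambda(x_t)$ by an arbitrary vector $u_t$ that is $\mathbf \Xi_{[t-1]}$-measurable. The entire statement reduces to the tower property of conditional expectation combined with the conditional-bias bound on $\Delta_t^i$ furnished by Lemma~\ref{lem:err_bound}.

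First I would condition on the history $\mathbf \Xi_{[t-1]}$ and use the law of total expectation to write
\begin{align}
\mathbb{E}[\langle u_t, \Delta_t^i \rangle] = \mathbb{E}_{\mathbf \Xi_{[t-1]}}\big[ \mathbb{E}[\langle u_t, \Delta_t^i \rangle \mid \mathbf \Xi_{[t-1]}] \big].
\end{align}
Since $u_t$ depends only on $\mathbf \Xi_{[t-1]}$, it is deterministic once the history is fixed and can be pulled out of the inner expectation, leaving $\langle u_t, \mathbb{E}[\Delta_t^i \mid \mathbf \Xi_{[t-1]}] \rangle$. This factorization is precisely where the independence hypothesis enters, mirroring the role played by the conditional independence of $x_t$ and $\Delta_t^i$ in Eq.~\eqref{eq:expbound2}.

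Next I would bound the conditioned inner product by Cauchy--Schwarz and then insert Eq.~\eqref{eq:stocerrbd}:
\begin{align}
\langle u_t, \mathbb{E}[\Delta_t^i \mid \mathbf \Xi_{[t-1]}] \rangle \leq \|u_t\|_2 \, \|\mathbb{E}[\Delta_t^i \mid \mathbf \Xi_{[t-1]}]\|_2 \leq A\beta_t^{\max}\|u_t\|_2.
\end{align}
Taking the outer expectation over $\mathbf \Xi_{[t-1]}$ and extracting the constant $A\beta_t^{\max}$ then yields $\mathbb{E}[\langle u_t, \Delta_t^i\rangle] \leq A\beta_t^{\max}\,\mathbb{E}[\|u_t\|_2]$, which is the claim.

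I do not expect a genuine difficulty here: once Lemma~\ref{lem:err_bound} is available the result is essentially bookkeeping, and indeed Lemma~\ref{lem:prod} is just the instantiation $u_t = \nabla_{x^i}\hat f_\lambda(x_t)$. The only step demanding care is the measurability justification for extracting $u_t$ from the conditional expectation: one must confirm that ``$u_t$ depends only on $\mathbf \Xi_{[t-1]}$'' indeed makes $u_t$ constant given the conditioning $\sigma$-algebra, so that the independence assumption legitimately factors the conditional expectation of the product. Every subsequent step (Cauchy--Schwarz, substitution of the $A\beta_t^{\max}$ bound, and the final outer expectation) is routine.
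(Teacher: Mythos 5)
Your proposal is correct and follows essentially the same route as the paper's proof: tower property, decoupling $u_t$ from $\Delta_t^i$ via conditioning, Cauchy--Schwarz, and the bound $\|\mathbb{E}[\Delta_t^i \mid \mathbf{\Xi}_{[t-1]}]\|_2 \leq A\beta_t^{\max}$ from Lemma~\ref{lem:err_bound}. The only cosmetic difference is that you pull $u_t$ out of the conditional expectation directly by its $\mathbf{\Xi}_{[t-1]}$-measurability, which makes the paper's final Jensen step (bounding $\mathbb{E}[\|\mathbb{E}[u_t \mid \mathbf{\Xi}_{[t-1]}]\|_2]$ by $\mathbb{E}[\|u_t\|_2]$) an equality rather than an inequality.
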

Now, it is ready to discuss the main convergence properties of the nonconvex Cyclic SBCD algorithm (Algorithm \ref{alg:c_sbcd}) and provide the rate of convergence for that.
\begin{proof}[Proof of Lemma \ref{lem:vecdoterr}]
We can obtain the result of Lemma \ref{lem:vecdoterr} by follows
\begin{align}
\mathbb{E} [\langle u_t, \Delta_t^i \rangle] = & \mathbb{E}_{\mathbf \Xi_{[t - 1]}} \left[\mathbb{E}[ \langle u_t, \Delta_t^i \rangle | \mathbf \Xi_{[t - 1]}]\right]\\
\overset{\text{(a)}} = & \mathbb{E}_{\mathbf \Xi_{[t - 1]}} \left[ \langle \mathbb{E}[ u_t | \mathbf \Xi_{[t - 1]}], \mathbb{E}[ \Delta_t^i | \mathbf \Xi_{[t - 1]}] \rangle \right]\\
\leq & \mathbb{E}_{\mathbf \Xi_{[t - 1]}} \left[ \| \mathbb{E}[ u_t | \mathbf \Xi_{[t - 1]}]\|_2 \cdot \|\mathbb{E}[ \Delta_t^i | \mathbf \Xi_{[t - 1]}] \|_2 \right]\\
\leq & A \beta_t^{\max} \mathbb{E}_{\mathbf \Xi_{[t - 1]}} \left[ \| \mathbb{E}[ u_t | \mathbf \Xi_{[t - 1]}]\|_2 \right]\\
\overset{\text{(b)}} \leq & A \beta_t^{\max} \mathbb{E} \left[ \|u_t\|_2 \right],
\end{align}
where (a) follows from the conditional independence between $u_t$ and $\Delta_t^i$, and (b) follows from Jensen's inequality.
\end{proof}

\begin{proof}[Proof of Theorem \ref{thm:cyclicbcd}]
From the Lipschitz smoothness, it holds that
\begin{align}
&f(x_{t + 1}^{\leq i}, x_{t}^{> i}) - f(x_{t + 1}^{< i}, x_{t}^{\geq i}) \\
\leq & \langle g_t^i, x_{t + 1}^i - x_{t}^i \rangle + \frac{L}{2}\|x_{t + 1}^i - x_{t}^i \|_2^2 \\
= & -\beta_t^i \langle g_t^i, \tilde g_t^i \rangle + \frac{L}{2}(\beta_t^i)^2\|\tilde g_t^i\|_2^2  \\
= & -(\beta_t^i - \frac{L}{2}(\beta_t^i)^2)\|g_t^i\|_2^2 + \frac{L}{2}(\beta_t^i)^2\|\Delta_t^i\|_2^2 - (\beta_t^i - L(\beta_t^i)^2)\langle g_t^i,\Delta_t^i \rangle
\label{eq:lipsm}
\end{align}
where all the equations follow the definition of $\Delta_t^i$ and the update law of Algorithm~\ref{alg:c_sbcd}.

Summing Eq.~\eqref{eq:lipsm} over $i$, then we obtain
\begin{align}
& f(x_{t + 1}) -  f(x_{t}) \leq -\sum_{i = 1}^b(\beta_t^i - \frac{ L }{2}(\beta_t^i)^2)\|g_t^i\|_2^2 + \sum_{i = 1}^b \frac{ L }{2}(\beta_t^i)^2\|\Delta_t^i\|_2^2 - \sum_{i = 1}^b(\beta_t^i -  L (\beta_t^i)^2)\langle g_t^i, \Delta_t^i \rangle.
\label{eq:bcdsumming}
\end{align}

Use Lemma \ref{lem:vecdoterr}, we also have the following fact,
\begin{align}
\label{eq:expbound}
\mathbb{E}[\langle g_t^i, \Delta_t^i \rangle] \leq \beta^{\max}_t A G.
\end{align}

Taking expectation over Eq.~\eqref{eq:bcdsumming}, we have
\begin{align}
& \mathbb{E}[ f(x_{t + 1})] - \mathbb{E}[ f(x_{t})]  \\
\leq & -\sum_{i = 1}^b(\beta_t^i - \frac{ L }{2}(\beta_t^i)^2)\mathbb{E}[\|g_t^i\|_2^2] + \sum_{i = 1}^b(\beta_t^i - L(\beta_t^i)^2)\beta^{\max}_t A G + \sum_{i = 1}^b\frac{ L }{2}(\beta_t^i)^2\mathbb{E}[\|\Delta_t^i\|_2^2]  \\
\leq & -(\beta_t^{\min} - \frac{ L }{2}(\beta_t^{\max})^2) \sum_{i = 1}^b \mathbb{E}[\|g_t^i\|_2^2] + \sum_{i = 1}^b\left( (\beta^{\max}_t)^2 A G + \frac{ L }{2}(\beta_t^i)^2\sigma^2 \right),
\label{eq:bcdexp}
\end{align}
where the first inequality follows from Eq.~\eqref{eq:expbound}, and the second inequality follows from the boundedness of $\mathbb{E}[\|\Delta_t^i\|_2]$ and $\mathbb{E}[\|g_t^i\|_2]$.

Rearranging Eq.~\eqref{eq:bcdexp}, we obtain
\begin{align}
&(\beta_t^{\min} - \frac{ L }{2}(\beta_t^{\max})^2) \sum_{i = 1}^b \mathbb{E}[\|g_t^i\|_2^2] \leq \mathbb{E}[ f(x_{t})] - \mathbb{E}[ f(x_{t + 1})] + \sum_{i = 1}^b\left( (\beta^{\max}_t)^2 A G + \frac{ L }{2}(\beta_t^i)^2\sigma^2 \right).
\label{eq:bcdrearr}
\end{align}

Also, we have
\begin{align}
\mathbb{E}[\|\nabla_{x^i} f(x_t)\|_2] \leq & \mathbb{E}[\|\nabla_{x^i} f(x_t) - g_t^i\|_2] + \mathbb{E}[\|g_t^i\|_2] \\
\overset{\text{(a)}} \leq &  L  \mathbb{E}[\|x_{t + 1}^{<i} - x_t^{<i}\|_2] + \mathbb{E}[\|g_t^i\|_2] \\
\overset{\text{(b)}} \leq &  L   \mathbb{E}\left[ \sqrt{\sum_{j < i} \| \beta_t^j \tilde g_t^j \|_2^2 } \right] + \mathbb{E}[\|g_t^i\|_2] \\
\leq &  L  \beta_t^{\max}  \mathbb{E}\left[ \sqrt{\sum_{j < i} \|\tilde g_t^j \|_2^2 } \right] + \mathbb{E}[\|g_t^i\|_2] \\
\overset{\text{(c)}} \leq &  L  \beta_t^{\max}  \sqrt{\mathbb{E}[\sum_{j < i} \|\tilde g_t^j \|_2^2 ]} + \mathbb{E}[\|g_t^i\|_2] \\
\overset{\text{(d)}} \leq &  L  \beta_t^{\max}  \sqrt{\sum_{j < i} (G^2 + \sigma^2) } + \mathbb{E}[\|g_t^i\|_2],
\label{eq:gradieq}
\end{align}
where (a) follows from the Lipschitz smoothness of $f$, (b) follows from $x_{t + 1}^j = x_t^j - \beta_t^j \tilde g_t^j$, (c) follows from Jenson's inequality, and (d) follows from the boundedness of gradient and boundedness of variance.

Summing Eq.~\eqref{eq:gradieq} over $i$, we can obtain
\begin{align}
& \mathbb{E}[\|\nabla f(x_t) \|_2^2] = \sum_{i = 1}^{b}\mathbb{E}[\|\nabla_{x^i} f(x_t)\|_2]\leq \sum_{i = 1}^{b}  L  \beta_t^{\max} \sqrt{\sum_{j < i} (G^2 + \sigma^2) } + \sum_{i = 1}^{b} \mathbb{E}[\|g_t^i\|_2].
\label{eq:gradbd}
\end{align}

Combine Eq.~\eqref{eq:gradbd} with Eq.~\eqref{eq:bcdrearr}, we can obtain
\begin{align}
&(\beta_t^{\min} - \frac{ L }{2}(\beta_t^{\max})^2) \mathbb{E}[\|\nabla f(x_t) \|_2^2]\\
\leq & (\beta_t^{\min} - \frac{ L }{2}(\beta_t^{\max})^2) \sum_{i = 1}^{b}  L  \beta_t^{\max} \sqrt{\sum_{j < i} (G^2 + \sigma^2) } + (\beta_t^{\min} - \frac{ L }{2}(\beta_t^{\max})^2) \sum_{i = 1}^{b} \mathbb{E}[\|g_t^i\|_2]\\
\leq & \mathbb{E}[ f(x_{t})] - \mathbb{E}[ f(x_{t + 1})] + (\beta_t^{\min} - \frac{ L }{2}(\beta_t^{\max})^2) \sum_{i = 1}^{b}  L  \beta_t^{\max} \sqrt{\sum_{j < i} (G^2 + \sigma^2) }\\
& + \sum_{i = 1}^b\left( (\beta^{\max}_t)^2 A G + \frac{ L }{2}(\beta_t^i)^2\sigma^2 \right),
\label{eq:eachite}
\end{align}
where the first inequality follows from substituting Eq.~\eqref{eq:gradbd} into the left-hand side of Eq.~\eqref{eq:bcdrearr}, and the second inequality follows from substituting Eq.~\eqref{eq:gradbd} into the right-hand side of Eq.~\eqref{eq:bcdrearr}.

Summing Eq.~\eqref{eq:eachite} over $t$, we have
\begin{align}
&\sum_{t = 1}^N (\beta_t^{\min} - \frac{ L }{2}(\beta_t^{\max})^2) \mathbb{E}[\|\nabla f(x_t) \|_2^2] \\
\leq & f(x_{1}) - f(x^*) \\
& + \sum_{t = 1}^N \left[(\beta_t^{\min} - \frac{ L }{2}(\beta_t^{\max})^2) \sum_{i = 1}^{b}  L  \beta_t^{\max} \sqrt{\sum_{j < i} (G^2 + \sigma^2) } + \sum_{i = 1}^b\left( (\beta^{\max}_t)^2 A G + \frac{ L }{2}(\beta_t^i)^2\sigma^2 \right)\right] \\
\leq & f(x_{1}) - f(x^*) + \sum_{t = 1}^N(\beta_t^{\max})^2 C_t.
\label{eq:bcdfinalsum} 
\end{align}
where $C_t$ is
\begin{align}
C_t = & (1 - \frac{ L }{2}\beta_t^{\max}) \sum_{i = 1}^{b}  L  \sqrt{\sum_{j < i} (G^2 + \sigma^2) } + b\left( A G + \frac{ L }{2} \sigma^2\right).
\end{align}
Using the probability distribution of $R$ given in Eq.~\eqref{eq:randompick}, we completes the proof.
\end{proof}

\begin{proof}[Proof of Corollary \ref{cor:cyclicrate}]
Combine these conditions with Eq.~\eqref{eq:bcdfinalsum}, we have
\begin{align}
&\sum_{t = 1}^N (\beta^{\min} - \frac{ L }{2}(\beta^{\max})^2) \mathbb{E}[\|\nabla f(x_t) \|_2^2] \leq f(x_{1}) - f(x^*) + N(\beta^{\max})^2 C,
\end{align}
where $C$ is
\begin{align}
C = & (1 - \frac{ L }{2}\beta^{\max}) \sum_{i = 1}^{b}  L  \sqrt{\sum_{j < i} (G_j^2 + \sigma^2) } + b\left( A G + \frac{ L }{2} \sigma^2 \right).
\end{align}
Using the probability distribution of $z$ given in Eq.~\eqref{eq:randompick}, we can obtain
\begin{align}
&\mathbb{E}\left[\|\nabla f(x_z)\|_2^2\right] \leq \frac{ f(x_1) - f^* + N (\beta^{\max})^2 C}{N (\beta^{\min} - \frac{ L }{2}(\beta^{\max})^2)}.
\end{align}

Thus, we can reach the rate of convergence of $\mathcal O(1 / \sqrt{N})$ by setting $\beta^{\min} = \beta^{\max} = \mathcal O(1 / \sqrt{N})$.
\end{proof}

\section{RCPG and SGA Algorithm}
\label{sec:rcpg-sga}
\subsection{Randomized Stochastic Block Coordinate Descent Algorithm}
\label{sec:rcpg}

We propose the randomized stochastic block coordinate descent algorithm as  Algorithm~\ref{alg:sbmdpg}. Note that we also use the same notation about gradient from Eq.~\eqref{def:tildeg_y} and Eq.~\eqref{def:tildeg_theta} with very a tiny difference in practical, where $y_{t + 1} = y_{t}$ in Eq.~\eqref{def:tildeg_theta}.

\begin{algorithm}[htb!]
\caption{Risk-Sensitive Randomized Coordinate Descent Policy Gradient (RCPG)}
\label{alg:sbmdpg}
\centering
\begin{algorithmic}[1]
\STATE {\bfseries Input:} Stepsizes $\{\beta_t^\theta\}$ and $\{\beta_t^y\}$, let $\beta_t^{\max} = \max\{\beta_t^\theta, \beta_t^y\}$.\\
\textbf{Option I:} $\{\beta_t^\theta\}$ and $\{\beta_t^y\}$ satisfy the Robbins-Monro condition.\\
\textbf{Option II:} $\beta_t^\theta$ and $\beta_t^y$ are set to be constants.
\FOR {episode $t=1,\dotsc,N$}
\FOR {time step $k=1,\dotsc,{\tau_t}$}
\STATE Compute $a_k \sim \pi_\theta(a|s_k)$, observe $r_k, s_{k+1}$.
\ENDFOR
\STATE Compute
\begin{align}
{R_t} &= \sum_{k = 1}^{\tau_t}  {{r_k}} \\
{\omega _t}(\theta_t) 
&= \sum _{k=1}^{{\tau_t}}{\nabla_\theta \ln{\pi_{\theta_t}}({a_{k}}|{s_{k}})}.
\end{align}
\STATE Randomly select $i_t \in \{1,2\}$ with distribution $[0.5,0.5]$. If $i_t=1$,
\begin{align}
{y_{t+1}} &= {y_t} + {\beta _t}\left(2R_t + \frac{1}{\lambda} - 2y_t\right),\\
{\theta_{t+1}} &= {\theta _t}.
\end{align}
else
\begin{align}
{y_{t+1}} &= {y_t},\\
{\theta_{t+1}} &= {\theta _t} + {\beta _t}\left( {2y_t{R_t} - {{({R_t})}^2}} \right){\omega _t}({\theta _t}).
\end{align}
\ENDFOR
\STATE {\bfseries Output} $\bar{x}_N$:
\\ \textbf{Option I:} Set  $\bar{x}_N = x_N$.
\\ \textbf{Option II:} Set $\bar{x}_N = x_z$, where $z$ is uniformly drawn from $\{ 1,2, \dotsc ,N\}$.
\end{algorithmic}
\end{algorithm}
Note that, the main difference between Cyclic SBCD and Randomized SBCD is that: at each iteration, Cyclic SBCD cyclically updates all blocks of variables, and the later updated blocks depending on the early updated blocks; while Randomized SBCD randomly chooses one block of variables to update.

\subsection{Risk-Sensitive Stochastic Gradient Ascent Policy Gradient}
\label{sec:sga}

We also proposed risk-sensitive stochastic gradient Ascent policy gradient as Algorithm \ref{alg:sgapg}.

\begin{algorithm}[htb!]
\caption{Risk-Sensitive Stochastic Gradient Ascent Policy Gradient (SGA)}
\label{alg:sgapg}
\centering
\begin{algorithmic}[1]
\STATE {\bfseries Input:} Stepsizes $\{\beta_t^\theta\}$ and $\{\beta_t^y\}$, let $\beta_t^{\max} = \max\{\beta_t^\theta, \beta_t^y\}$.\\
\textbf{Option I:} $\{\beta_t^\theta\}$ and $\{\beta_t^y\}$ satisfy the Robbins-Monro condition.\\
\textbf{Option II:} $\beta_t^\theta$ and $\beta_t^y$ are set to be constants.
\FOR {episode $t=1,\dotsc,N$}
\FOR {time step $k=1,\dotsc,{\tau_t}$}
\STATE Compute $a_k \sim \pi_\theta(a|s_k)$, observe $r_k, s_{k+1}$.
\ENDFOR
\STATE Compute
\begin{align}
{R_t} &= \sum_{k = 1}^{\tau_t}  {{r_k}} \\
{\omega _t}(\theta_t) 
&= \sum _{k=1}^{{\tau_t}}{\nabla_\theta \ln{\pi_{\theta_t}}({a_{k}}|{s_{k}})}.
\end{align}
\STATE Update parameters,
\begin{align}
{y_{t+1}} &= {y_t} + {\beta _t}\left(2R_t + \frac{1}{\lambda} - 2y_t\right),\\
{\theta_{t+1}} &= {\theta _t} + {\beta _t}\left( {2y_t{R_t} - {{({R_t})}^2}} \right){\omega _t}({\theta _t}).
\end{align}
\ENDFOR
\STATE {\bfseries Output} $\bar{x}_N$:
\\ \textbf{Option I:} Set  $\bar{x}_N = x_N$.
\\ \textbf{Option II:} Set $\bar{x}_N = x_z$, where $z$ is uniformly drawn from $\{ 1,2, \dotsc ,N\}$.
\end{algorithmic}
\end{algorithm}

\section{Details of the Experiments}
\label{sec:parameters}

The parameter settings for portfolio management domain are as follows: $\tau  = 50$, $r_l = 1.001$, $r_{\text{nl}}^{\text{high}} = 2$, $r_{\text{nl}}^{\text{low}} = 1.1$, $p_{\text{risk}} = 0.05$, $p_{\text{switch}} = 0.1$, $W = 4$, $\eta = 0.2$, startup cash $\$100,000$.

The parameter settings of American-style Option domain are as follows: $K_{\text{put}} = 1$, $K_{\text{call}} = 1.5$, $x_0 = 1.25$, $f_u = 9 / 8$, $f_d = 8 / 9$, $p = 0.45$, $\tau = 20$.

The parameter settings of optimal stopping domain are as follows: $x_0 = 1.25$, $f_u = 2$, $f_d = 0.5$, $p = 0.65$, $\tau = 20$.

\end{document}